\newtheorem{definition}{Definition}
\newcommand{\A}{{\sf A}}
\newcommand{\K}{{\sf K}}
\newcommand{\R}{{\sf R}}
\newcommand{\D}{{\sf D}}
\renewcommand{\phi}{\varphi}
\renewcommand{\epsilon}{\varepsilon}
\newtheorem{lemma}{Lemma}
\newtheorem{claim}{Claim}
\newtheorem{corollary}{Corollary}
\newtheorem{theorem}{Theorem}
\newenvironment{proof}{\begin{trivlist}\item\noindent{\em Proof.}}{\hfill {$\boxtimes$}\end{trivlist}}
\newenvironment{proof-of-claim}{\begin{trivlist}\item\noindent{\em Proof of Claim.}}{\hfill {\tiny $\boxtimes$}\end{trivlist}}
\renewcommand{\alpha}{\upalpha}
\renewcommand{\beta}{\upbeta}
\renewcommand{\[}{\llbracket}
\renewcommand{\]}{\rrbracket}
\title{De Re and De Dicto Awareness in Multiagent Systems}
\title{An Epistemic Perspective on Agent Awareness}
\author {
    Pavel Naumov\textsuperscript{\rm 1},
    Alexandra Pavlova\textsuperscript{\rm 2}
}
\begin{document}

\maketitle

\begin{abstract}
The paper proposes to treat agent awareness as a form of knowledge, breaking the tradition in the existing literature on awareness. 
It distinguishes the de re and de dicto forms of such knowledge. The work introduces two modalities capturing these forms and formally specifies their meaning using a version of 2D-semantics. The main technical result is a sound and complete logical system describing the interplay between the two proposed modalities and the standard ``knowledge of the fact'' modality.     
\end{abstract}

\section{Introduction}

Artificial agents are increasingly making important decisions that affect our lives.
The choice of the right decision often depends on the {\em awareness} about other agents' presence at the scene. A war robot must minimise casualties if it is aware of civilians present at the theatre of operations. A self-driving car must stop at the yield (give way) sign if it is aware of an approaching vehicle on the other road.
An agent, who is a medical doctor, must offer help if she is aware of someone being sick. An autonomous driving system must stop the vehicle if it is aware of an approaching emergency vehicle. A machine whose values align with humans' must apologise if it is aware of someone being offended. A well-mannered robot should not take the last piece of a cake if it is aware of somebody else wanting this piece.

Awareness is a vague term that could be interpreted in different ways. In the literature, the authors consider awareness of an object~\cite*{bc22jmid,bc21jmid,bcs11synthese} and conceptual awareness~\cite*{fh87ai,vfvw13tark,vv10synthese,gv15synthese,s15hel}. In this work, we focus on a specific form of the former: ``agent awareness'' or awareness of one agent about the existence of another agent with a certain property. Although the existing literature on awareness treats it as a distinct concept, the Cambridge Dictionary suggests an epistemic interpretation of awareness by defining it as ``knowledge that something exists''. In this paper, we give a formal account of this epistemic approach to awareness. While doing this, we observe that there are two distinct ways in which one agent can be aware of another.

Indeed, let us consider the following two sentences:
\begin{quote}
\em Vehicle's autopilot started to slow down after it became {\bf\em  aware} of a AAAI attendee crossing the road.    
\end{quote}
\begin{quote}
\em Vehicle's autopilot started to slow down after it became {\bf\em aware} of being followed by a police car.   
\end{quote}
Note that, in these sentences, the autopilot is aware in two distinct senses. In the first sentence, it knows that there exists a human on the road ahead of the vehicle. The human happens to be a AAAI attendee, but the autopilot does not necessarily know this. The awareness is about {\em the physical object} (human on the road) and not her property of being a AAAI attendee. Using first-order epistemic logic, we can write this as
$$
\exists x(\text{AAAI-Attendee}(x) \wedge \K_{\text{autopilot}} \text{CrossingRoad}(x)).
$$
In the second sentence, the autopilot hears the siren and knows that one of the vehicles driving behind is a police car, but it might not even know which of the vehicles behind belongs to the police. This is awareness of someone with {\em designator} ``police car'' being behind: 
$$
\K_{\text{autopilot}}\, \exists x(\text{PoliceCar}(x) \wedge\text{DrivingBehind}(x)).
$$

In the philosophy of language, a distinction between a reference to an object and to a designator of this object is usually referred to as a {\em de re/de dicto} distinction. 
Following this tradition, we say that the autopilot is aware of the AAAI attendee {\em de re} and of the police car {\em de dicto}.
De re/de dicto distinction (without the context of awareness) has been the subject of studies in the philosophy of language~\cite{q56jp,l79pr,c76ps,a97lp,ks19ohr} and law~\cite{a14hlr,y11pfll}. \citet{rt11pq} introduced a related notion ``de objecto''.
The existing approaches to formally capturing the distinction mostly rely on quantifiers.

As another example, consider the sentences:
\begin{quote}\em
A robot series {\tt i17} must introduce itself if a security guard becomes {\bf\em aware} of its presence
\end{quote}
\begin{multline*}
\forall x(\text{\tt i17}(x) \wedge \K_{\text{guard}} (\text{Present}(x))\!\to\!\text{MustSelfIntro}(x)),    
\end{multline*}
and
\begin{quote}\em
A security guard must report to a supervisor any {\bf\em sighting} of a series {\tt i17} robot
\end{quote}
\begin{multline*}
\forall x( \K_{\text{guard}} (\text{Present}(x) \wedge\text{\tt i17}(x))\!\to\!\text{MustReport}(\text{guard})),    
\end{multline*}
the first refers to {\em de re} awareness of the security guard about the robot as a physical object (not necessarily of {\tt i17} series). Otherwise, what would be the point of an introduction if the guard already knows that the robot is of series {\tt i17}? The word ``sighting'' in the second sentence refers to de dicto awareness. The guard must report to a supervisor if the guard knows that the robot is of that specific series.

Finally, the sentence
\begin{quote}\em
A robot of series {\tt i17} must self-distract if it becomes {\bf\em aware} of someone {\bf\em aware} of its presence
\end{quote}
\begin{multline*}
\forall x (\text{\tt i17}(x) \wedge \exists y(y\neq x \wedge \K_{x}\K_{y}(\text{Present}(x))) \\\to\text{MustSelfDistract}(x))
\end{multline*}
mentions awareness twice. The first of them is the awareness of $X$ (robot series {\tt i17}) that some other agent $Y$ nearby has the property of ``being aware of $X$'s presence''. This is a de dicto awareness. The second of them is $Y$'s de re awareness of $X$ as a physical object, perhaps without knowing that $X$ is a robot series {\tt i17}.   

Even without focusing specifically on awareness, very few quantifier-free logical systems for capturing the de re/de dicto distinction have been proposed. \citet{ent23jlc} considered modalities that capture de re/de dicto versions of ``know who''. Epistemic Logic with Assignments~\cite{ws18aiml,ctw21tark,wws22apal} proposes a very general language that can also be used to capture de re and de dicto knowledge of one agent about knowledge of the other. \citet{jn25cyber} proposed a modal logical system for reasoning about de re and de dicto knowledge of a property of an agent inferred from a dataset. However, none of these logical systems can express either de re or de dicto awareness.

In this paper, we propose to capture de re and de dicto forms of awareness by two modalities, whose meaning is defined using a 2D-semantics~\cite{s21sep}. Our main technical result is a sound and complete logical system capturing the interplay between the traditional ``knowledge of the fact'' modality and these two new modalities. The completeness proof uses a non-trivial modification of a recently introduced ``matrix'' technique.

The proposed logical system complements the existing body of literature on other specialised forms of knowledge:
know-how~\cite*{nt17aamas,fhlw17ijcai,aa16jlc},
know-who~\cite{ent23jlc},
know-whether~\cite{fwv15rsl},
know-why~\cite{xws19synthese}, and
know value~\cite{wf13ijcai,b16aiml}.

\section{Running Example}

Imagine Ann, who decided to take a break from AAAI-26 meetings at 
Singapore's {\em Asian Civilisations Museum}.
While in the building, she booked a WeRide self-driving car to take her back to the conference. As Ann leaves the building, she notices a car parked in front of the entrance to the building. Unknown to Ann, the car in front of her is an unmarked police vehicle. Ann is aware of the car in front of her. The car is a police vehicle. Thus, {\em Ann is de re aware of the police vehicle next to the museum}.

A few seconds after exiting the building, Ann gets a text message that her WeRide has arrived. Ann cannot see the WeRide vehicle because it is parked around the corner, but Ann knows that the WeRide is somewhere near her. Hence, {\em Ann is de dicto aware of the WeRide vehicle next to the museum}.
In this paper, we capture these two forms of awareness by two modalities.
We believe that the definitions of these modalities are the most elegant in the {\em egocentric} logic setting. The semantics of the traditional modal logical systems is defined in terms of a binary relation $w\Vdash\phi$. In such a setting, formula $\phi$ captures a property of possible world $w$. \citet{p68nous} proposed to consider logics that capture properties of {\em agents} rather than possible {\em worlds}. He called such logics ``egocentric''. The semantics of egocentric logical systems can be defined in terms of a binary relation $a\Vdash \phi$ between an agent $a$ and a formula $\phi$. Multiple versions of such systems, not dealing with awareness, have been proposed in the literature~\cite{gh91kr,gh93jlc,g95ai,slg11lia,slg13tark,ch15jal,chp16jlli,jn22ijcai-preferences,jn24synthese-preferences}.  

In order to capture knowledge and awareness, one can extend egocentric semantics by considering a ternary satisfaction relation $w,a\Vdash \phi$ between possible world $w$, agent $a$, and formula $\phi$. In such a setting, formula $\phi$ captures property $\phi$ of agent $a$ in world $w$. 

\begin{figure}[ht]
\begin{center}
\vspace{2mm}
\scalebox{0.55}{\includegraphics{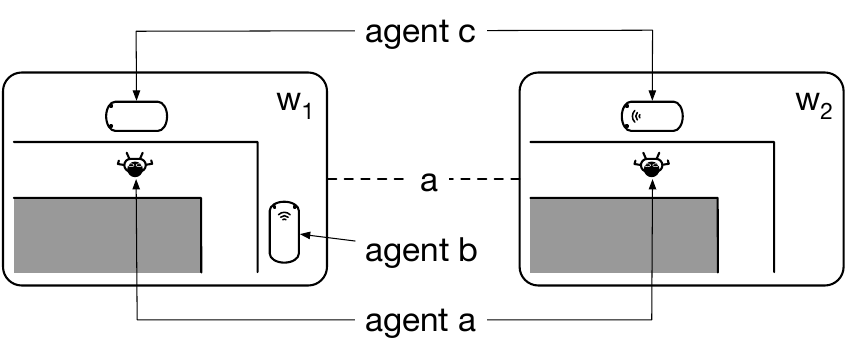}}
\caption{Symbol {\scriptsize \faWifi\ } designates a WeRide vehicle. }\label{intro figure}
\end{center}
\vspace{0mm}
\end{figure}
For example, consider an {\em epistemic model}, depicted in Figure~\ref{intro figure}, capturing the setting of our running example. This model has two worlds, $w_1$ and $w_2$ with $w_1$ being the actual world in the example. There are three agents {\em present} in world $w_1$: Ann (agent $a$), WeRide vehicle (agent $b$), unmarked police vehicle (agent $c$). In Figure~\ref{intro figure}, two-directional arrows represent transworld identity between instances of agents in different worlds. Although the nature and the very existence of transworld identity is a widely discussed subject in the philosophy of language~\cite{mj22sep}, in this work, we assume such identity to be given.

Because agent $c$ is a police vehicle in world $w_1$ we can write: 
\begin{equation*}
w_1,c\Vdash \text{``is police vehicle''}.
\end{equation*}
Furthermore, because agent $c$ is located near the museum, 
\begin{equation*}
w_1,c\Vdash \text{``is police vehicle''} \wedge \text{``is near the museum''}.
\end{equation*}

Ann cannot distinguish world $w_1$ from world $w_2$. However, in world $w_2$ the same agent $c$ is a WeRide vehicle located near the museum:
\begin{equation*}
w_2,c\Vdash \text{``is a WeRide vehicle''} \wedge \text{``is near the museum''}.
\end{equation*}
Because the same agent $c$ is present in both worlds that Ann cannot distinguish, Ann is aware of agent $c$. Since in the actual world agent $c$ is an unmarked police vehicle, in world $w_1$ Ann is {\em de re}  (as of a physical object) aware of the police vehicle near the museum. We write this as
\begin{equation*}
w_1,a\Vdash \R(\text{``is police vehicle''} \wedge \text{``is near the museum''}).
\end{equation*}
Recall that agent $b$ is also present in the world $w_1$ and, in this world, it is a WeRide vehicle located near the museum:
\begin{equation*}
w_1,b\Vdash \text{``is a WeRide''} \wedge \text{``is near the museum''}.
\end{equation*}
Because Ann cannot see agent $b$, she is not aware of it de re (as a physical object):
\begin{equation*}
w_1,a\Vdash\neg \R(\text{``is a WeRide''} \wedge \text{``is near the museum''}).
\end{equation*}
At the same time, because Ann got the message from WeRide that her vehicle had arrived at the museum, there must be a WeRide near the museum in each world that Ann cannot distinguish from the current world. In our example, there is a WeRide near the museum in world $w_1$ (agent $b$) as well as in world $w_2$ (agent $c$). As a result, Ann is aware of a WeRide, as a concept (de dicto), being present near the museum:
\begin{equation*}
w_1,a\Vdash \D(\text{``is a WeRide''} \wedge \text{``is near the museum''}).
\end{equation*}

The rest of the paper is structured as follows. First, we introduce epistemic models and proceed by defining the syntax and semantics of our logical system. Then we propose the axiomatisation. Having informally discussed the axioms, we state the soundness theorem. In the next section, we introduce the notions of general awareness and a $\lambda$-{\bf\em assured} set. Finally, we prove the completeness of the logical system using the ``matrix'' technique.

\section{Epistemic Models}

In this section, we define the class of models that we use later to define the formal semantics of our logical system. Throughout the paper, we assume a fixed nonempty set of propositional variables.

\begin{definition}\label{epistemic model definition}
\hspace{-2mm} A tuple $\!(W,\mathcal{A},P,\sim,\pi)\!$ is an epistemic model if
\begin{enumerate}
    \item $W$ is a (possibly empty) set of all ``worlds'',
    \item $\mathcal{A}$ is a (possibly empty) set of ``agents'',
    \item $P\subseteq \mathcal{A} \times W$ is a ``presence'' relation,
    \item $\sim_a$ is an ``indistinguishability'' equivalence relation on the set $P_a=\{w\in W\mid aPw\}$ for each agent $a\in \mathcal{A}$,
    \item $\pi(p)\subseteq  P$ for each propositional variable $p$.
\end{enumerate}
\end{definition}

In addition to the notation $P_a$, introduced above, it is also convenient to use the notation
$P_w=\{a\in \mathcal{A}\;|\;aPw\}$.

In our ``museum'' running example, set $W$ consists of worlds $w_1$ and $w_2$. Set $\mathcal{A}$ is $\{a,b,c\}$. Presence relation $P$ consists of all pairs from the set $\mathcal{A}\times W$ except for $(b,w_2)$ because agent $b$ is not present in world $w_2$. In the same example, $w_1\sim_a w_2$. The relations $\sim_b$ and $\sim_c$ are not important for that example. If propositional variable $p$ represents the statement ``is a WeRide'', then $\pi(p)=\{(b,w_1),(c,w_2)\}$.

\section{Syntax and Semantics}

The language $\Phi$ of our logical system is defined by the grammar:
$$
\phi:=p\mid\neg\phi\mid\phi\to\phi\mid\K\phi\mid\R\phi\mid \D\phi,
$$
where $p$ is a propositional variable. We read $\K\phi$ as ``knows $\phi$ about herself'', $\R\phi$ as ``de re aware about someone with property $\phi$'', and $\D\phi$ as ``de dicto aware about someone with property $\phi$''. We assume that conjunction $\wedge$ and disjunction $\vee$ as well as constants truth $\top$ and false $\bot$ are defined in the standard way.

\begin{definition}\label{sat} 
For any world $w\in W$, any agent $a\in P_w$ of an epistemic model $(W,\mathcal{A},P,\sim,\pi)$, and any formula $\phi\in \Phi$, the satisfaction relation $w,a\Vdash \phi$ is defined recursively as follows:
\begin{enumerate}
    \item $w,a\Vdash p$ if $(a,w)\in \pi(p)$,
    \item $w,a\Vdash \neg \phi$ if $w,a\nVdash \phi$,
    \item $w,a\Vdash \phi\to\psi$ if  $w,a\nVdash \phi$  or $w,a\Vdash \psi$, 
    \item $w,a\Vdash \K\phi$ if $u,a\Vdash\phi$, for each world $u\in P_a$ such that $w\sim_a u$,
    \item $w,a\Vdash \R\phi$ if there is such an agent $b\in P_w$ that 
    \begin{enumerate}
        \item $w,b\Vdash \phi$ and
        \item for any world $u\in P_a$ if $w\sim_a u$, then $u\in P_b$,
    \end{enumerate}
    \item $w,a\Vdash \D\phi$ if for each world $u\in P_a$ such that $w\sim_a u$ there is an agent $b\in P_u$ such that $u,b\Vdash \phi$.
\end{enumerate}
\end{definition}
Note that item~4 above requires property $\phi$ to be true {\bf about} agent $a$ in all worlds indistinguishable {\bf by} agent $a$ from the current world. Thus, modality $\K\phi$ captures the knowledge of $\phi$ {by} agent $a$ {about} herself.

Item~5 above states that agent $b$ has property $\phi$ in the current world $w$ and agent $b$ is present in all worlds indistinguishable by agent $a$ from the current world. In other words, agent $b$ has property $\phi$ and agent $a$ is aware of agent $b$. In this case, we say that agent $a$ is {\em de re} aware of $\phi$. In our example in Figure~\ref{intro figure}, agent $c$ has the property ``is a police vehicle near the museum'' in the current world $w_1$ and agent $c$ is present in all worlds indistinguishable by Ann from the current world. 

Item~6 above states that in each world indistinguishable by $a$ from the current world, there is at least one agent with property $\phi$. Thus, agent $a$ is {\em de dicto} aware of $\phi$. In our running example, there is a WeRide vehicle near the museum in each of the worlds indistinguishable by Ann from the current world $w_1$.

In the philosophy of language, the type of semantics that we gave in Definition~\ref{sat} is sometimes called a 2D-semantics~\cite{s21sep}.
Modality $\K$ for such semantics has been studied before~\cite{slg11lia,slg13tark,en21aaai,ent23jlc,nt23jsl}. Modalities $\R$ and $\D$ are original to this paper.

\section{Axiomatisation}

In addition to the tautologies in language $\Phi$, our logical system has the following axioms:
\begin{enumerate} 
    \item Truth: $\K\phi\to\phi$,
    \item Negative Introspection: $\neg\K\phi\to\K\neg\K\phi$,
    \item Distributivity: $\K(\phi\to\psi)\to(\K\phi\to\K\psi)$,
    \item Self-Awareness: $\phi\to \R\phi$ and $\K\phi\to \D\phi$, 
    \item Introspection of Awareness: $\D\phi\to \K\D\phi$,
    \item Unawareness of Falsehood: $\neg \R\bot$ and $\neg\D\bot$,
    \item Disjunctivity: $\R(\phi\vee\psi)\to \R\phi\vee\R\psi$,
    \item General Awareness: $\D(\R\phi\vee \D\phi)\to \D\phi$. 
\end{enumerate}
The first three axioms are the standard axioms of the epistemic logic. It is easy to see that they hold for ``knows about herself'' modality $\K$. The Positive Introspection principle also holds for $\K$. We derive it from our axioms in Lemma~\ref{positive introspection lemma} in the appendix.

Note that Definition~\ref{sat} requires agent $a$ to be present in the world $w$ each time when $w,a\Vdash\phi$. Also, by Definition~\ref{epistemic model definition}, relation $\sim_a$ is defined only on the worlds in which agent $a$ is present. As a result, each agent is present in all worlds that she cannot distinguish from the current world. Thus, each agent is aware of her own presence in the current world. Then, if agent $a$ has property $\phi$, then $a$ is {\em de re} aware of $\phi$. We capture this observation in the first Self-Awareness axiom.
If agent $a$ has property $\K\phi$, then $\phi$ is true about $a$ in all worlds indistinguishable by $a$ from the current world. Hence, if agent $a$ has property $\K\phi$, then $a$ is {\em de dicto} aware of $\phi$. We capture this in the second Self-Awareness axiom. 

By item~6 of Definition~\ref{sat}, an agent is de dicto aware of $\phi$ if each indistinguishable world contains an agent with property $\phi$. Thus, if formula $\D\phi$ is true, then this formula must also be true in all indistinguishable worlds. We state this in the Introspection of Awareness axiom. A similar axiom for de re modality $\R$, generally speaking, is not valid. 

Note that items~5 and 6 of Definition~\ref{sat} require that formula $\phi$ must be true about agent $b$ in the current world. Thus, an agent cannot be either de re or de dicto aware of a falsehood. We state this in the two Unawareness of Falsehood axioms.

Item~5(a) of Definition~\ref{sat} requires that for $w,a\Vdash\R(\phi\vee\psi)$ to be true, formula $\phi\vee\psi$ must be true in the current world about some agent $b$. Then either $\phi$ or $\psi$ must be true about $b$ in {\em the current world}. As a result, either statement $w,a\Vdash\R\phi$ or $w,a\Vdash\R\psi$ must be true. This justifies the Disjunctivity axiom.

Note that if, in each indistinguishable world, there is someone who is (either de re or de dicto) aware of a WeRide, then there must be a WeRide in each indistinguishable world. We capture this observation in the General Awareness axiom. We discuss the axiom's name in the section on $\lambda$-assured sets.

We write $\vdash \phi$ and say that formula $\phi\in\Phi$ is a {\em theorem} of our logical system if $\phi$ is derivable from the above axioms using the Modus Ponens, the Necessitation, and the two forms of the Monotonicity inference rules:
$$
\dfrac{\phi, \phi\to\psi}{\psi}
\hspace{10mm}
\dfrac{\phi}{\K\phi}
\hspace{10mm}
\dfrac{\phi\to \psi}{\D\phi\to \D\psi}
\hspace{10mm}
\dfrac{\phi\to \psi}{\R\phi\to \R\psi}.
$$

In addition to unary relation $\vdash\phi$, we also consider a binary relation $X\vdash \phi$ between a set of formulae $X\subseteq\Phi$ and a formula $\phi\in\Phi$. We say that $X\vdash \phi$ is true if formula $\phi$ is derivable from the {\em theorems} of our logical system and the set of additional axioms $X$ using {\em only} the Modus Ponens inference rule. It is easy to see that the statements $\varnothing\vdash\phi$ and $\vdash\phi$ are equivalent. We say that the set of formulae $X$ is consistent if $X\nvdash\bot$. The theorem below captures our informal discussion above. Formally, it follows from Definition~\ref{sat}.

\begin{lemma}[Lindenbaum]\label{Lindenbaum's lemma}
Any consistent set of formulae can be extended to a maximal consistent set of formulae.
\end{lemma}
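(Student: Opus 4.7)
The plan is the standard Lindenbaum construction adapted to our consequence relation $\vdash$: apply Zorn's lemma to the poset of consistent supersets of $X$ ordered by inclusion, and argue that a maximal element must in fact be maximal consistent. Two prerequisites need to be checked for this specific system: (i) compactness of $\vdash$, and (ii) a splitting property saying that for any consistent $Y$ and any $\phi\in\Phi$, at least one of $Y\cup\{\phi\}$ and $Y\cup\{\neg\phi\}$ remains consistent.

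Compactness is immediate from the definition: any derivation witnessing $X\vdash\phi$ is a finite sequence and thus uses only finitely many premises from $X$. The splitting property will follow from a deduction theorem for $\vdash$: $X\cup\{\phi\}\vdash\psi$ iff $X\vdash\phi\to\psi$. This is the standard propositional induction on the length of the derivation. Crucially, the induction goes through here because $\vdash$ is generated by Modus Ponens alone---the Necessitation and Monotonicity rules for $\K$, $\D$, $\R$ are used only to establish theorems of the system, not inside derivations from $X$. Once the deduction theorem is in place, suppose for contradiction that both $X\cup\{\phi\}\vdash\bot$ and $X\cup\{\neg\phi\}\vdash\bot$; then $X\vdash\phi\to\bot$ and $X\vdash\neg\phi\to\bot$, whence $X\vdash\neg\phi$ and $X\vdash\neg\neg\phi$ by propositional reasoning, giving $X\vdash\bot$ and contradicting the consistency of $X$.

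With these tools in hand, I apply Zorn's lemma to the family $\mathcal{C}$ of consistent supersets of $X$. Any chain $\{Y_i\}_{i\in I}\subseteq\mathcal{C}$ has upper bound $Y=\bigcup_i Y_i$; if $Y\vdash\bot$, then by compactness a finite subset of $Y$ already derives $\bot$, and that finite subset lies in some single $Y_i$, contradicting the consistency of $Y_i$. Hence a maximal element $\hat X\supseteq X$ of $\mathcal{C}$ exists. The splitting property then forces $\hat X$ to contain either $\phi$ or $\neg\phi$ for every $\phi\in\Phi$: otherwise splitting would hand us a strictly larger consistent extension of $\hat X$, violating maximality. Thus $\hat X$ is maximal consistent.

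The only real point of care---what one might call the ``main obstacle''---is verifying that the deduction theorem is not spoiled by Necessitation or Monotonicity. As noted above, these rules are confined to the formation of theorems and therefore do not appear in derivations from additional premises, so the induction reduces cleanly to the propositional case and the whole argument is routine.
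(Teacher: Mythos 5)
Your proof is correct and takes essentially the same route as the paper, which simply invokes the standard Lindenbaum argument by citation; your one point of care --- that the deduction theorem survives because the relation $X\vdash\phi$ is generated by Modus Ponens alone, with Necessitation and the Monotonicity rules confined to the production of theorems --- is precisely the observation that makes the standard proof applicable here, and it is the same induction the paper records separately as its Deduction lemma in the appendix. Nothing further is needed.
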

\begin{proof}
The standard proof of Lindenbaum's lemma~\cite[Proposition 2.14]{m09} applies here.
\end{proof}

\section{Soundness}
\begin{theorem}[soundness]
If $\vdash\phi$, then $w,a\Vdash \phi$ for each world $w$ and each agent $a\in P_w$ of each epistemic model.    
\end{theorem}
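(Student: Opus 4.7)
The plan is a routine induction on the length of a derivation of $\vdash\phi$, showing that each axiom schema is valid under $\Vdash$ and that each inference rule preserves validity. Throughout, I will exploit the fact that $\sim_a$ is an equivalence relation on $P_a$ and that, by Definition~\ref{sat}, every satisfaction statement $w,a\Vdash\psi$ presupposes $a\in P_w$.

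For the three classical axioms (Truth, Negative Introspection, Distributivity), the argument is the usual one for S5: reflexivity of $\sim_a$ on $P_a$ yields Truth; symmetry and transitivity of $\sim_a$ give Negative Introspection; Distributivity is immediate from item~4 of Definition~\ref{sat}. The Unawareness of Falsehood axioms $\neg\R\bot$ and $\neg\D\bot$ follow because clauses~5(a) and~6 of Definition~\ref{sat} require some agent $b$ to satisfy $\bot$ at some world, which is impossible. For Disjunctivity, if $w,a\Vdash\R(\phi\vee\psi)$ via witness $b\in P_w$, then $w,b\Vdash\phi$ or $w,b\Vdash\psi$ in the same world $w$, and the same witness $b$ proves $w,a\Vdash\R\phi$ or $w,a\Vdash\R\psi$.

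The Self-Awareness axioms both use $a$ itself as witness. For $\phi\to\R\phi$: if $w,a\Vdash\phi$ then $a\in P_w$, and for any $u\in P_a$ with $w\sim_a u$ we have $u\in P_a$, so clause~5 applies with $b=a$. For $\K\phi\to\D\phi$: if $w,a\Vdash\K\phi$ then for each $u\in P_a$ with $w\sim_a u$ we have $u,a\Vdash\phi$, and $a\in P_u$ holds because $u\in P_a$ means $aPu$. Introspection of Awareness follows from transitivity and symmetry of $\sim_a$: if $w,a\Vdash\D\phi$ and $u\in P_a$ with $w\sim_a u$, then any $v\in P_a$ with $u\sim_a v$ satisfies $w\sim_a v$, so the witness required at $v$ for $\D\phi$ exists, giving $u,a\Vdash\D\phi$.

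The main obstacle, and the only axiom requiring genuine thought, is General Awareness $\D(\R\phi\vee\D\phi)\to\D\phi$. Suppose $w,a\Vdash\D(\R\phi\vee\D\phi)$ and fix any $u\in P_a$ with $w\sim_a u$; I must produce some $c\in P_u$ with $u,c\Vdash\phi$. By assumption there is $b\in P_u$ with $u,b\Vdash\R\phi\vee\D\phi$. If $u,b\Vdash\R\phi$, clause~5(a) directly supplies $c\in P_u$ with $u,c\Vdash\phi$. If $u,b\Vdash\D\phi$, I instantiate clause~6 at the world $u$ itself, which is legal because $b\in P_u$ means $u\in P_b$ and $\sim_b$ is reflexive on $P_b$, yielding $c\in P_u$ with $u,c\Vdash\phi$. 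This reflexivity move is the crux of the argument. Finally, Modus Ponens is immediate from clause~3, Necessitation follows because $\vdash\phi$ implies universal validity across all worlds in $P_a$, and the two Monotonicity rules for $\R$ and $\D$ preserve the witness $b$ in clauses~5 and~6 unchanged.
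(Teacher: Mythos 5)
Your proof is correct and follows essentially the same route as the paper: all axioms and rules except General Awareness are verified by routine semantic checks, and General Awareness is handled by the same case split on $\R\phi$ versus $\D\phi$ at the witness agent, with reflexivity of $\sim_b$ supplying the key instantiation of clause~6 at the world $u$ itself (the paper phrases this contrapositively, but the argument is identical).
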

The soundness of the Truth, the Negative Introspection, the Distributivity, the Self-Awareness, the Introspection of Awareness, the Unawareness of Falsehood, and the Disjunctivity axioms as well as of the inference rules are straightforward. Below, we prove the soundness of the General Awareness axiom as a separate lemma.  






\begin{lemma}
If $w,a\Vdash \D(\R\phi\vee\D\phi)$, then $w,a\Vdash \D\phi$.    
\end{lemma}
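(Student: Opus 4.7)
The plan is to unfold the definition of $\D$ on both sides and reduce the task to a per-indistinguishable-world argument. Assume $w,a\Vdash \D(\R\phi\vee\D\phi)$. To prove $w,a\Vdash \D\phi$, I would fix an arbitrary $u\in P_a$ with $w\sim_a u$ and produce some $c\in P_u$ such that $u,c\Vdash \phi$. The hypothesis, via item~6 of Definition~\ref{sat}, hands me an agent $b\in P_u$ with $u,b\Vdash \R\phi\vee\D\phi$, and I would then split on the disjunction.

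In the de re case, $u,b\Vdash \R\phi$ directly yields, by item~5(a) of Definition~\ref{sat}, an agent $c\in P_u$ (the witness $b$ from item~5 applied in world $u$) for which $u,c\Vdash \phi$, which is exactly what is needed. The extra presence clause of item~5(b) about worlds indistinguishable by $b$ is not used here and may be safely discarded.

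The de dicto case is the one that requires a small but crucial observation, and is the place I expect the main conceptual move to sit. If $u,b\Vdash \D\phi$, item~6 tells me only that every world $u'\in P_b$ with $u\sim_b u'$ contains some agent $c\in P_{u'}$ with $u',c\Vdash\phi$. To finish I need such a witness inside the particular world $u$ itself, not in some other $u'$. This is supplied by Definition~\ref{epistemic model definition}: since $b\in P_u$ means $u\in P_b$, and $\sim_b$ is an equivalence relation on $P_b$, reflexivity gives $u\sim_b u$. Taking $u'=u$ then produces the required $c\in P_u$ with $u,c\Vdash\phi$.

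Both cases deliver a witness $c\in P_u$ with $u,c\Vdash\phi$, and since $u$ was an arbitrary world with $u\in P_a$ and $w\sim_a u$, item~6 of Definition~\ref{sat} gives $w,a\Vdash \D\phi$. The only subtlety worth flagging explicitly in the write-up is the reflexivity step in the de dicto case, since the General Awareness axiom would fail without presence of $b$ in $u$ being forced (and reflexivity of $\sim_b$ at $u$) by the definition of an epistemic model.
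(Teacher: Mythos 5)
Your proof is correct and follows essentially the same argument as the paper's: unfold $\D$ at an indistinguishable world $u$, obtain an agent there satisfying $\R\phi\vee\D\phi$, and in either case extract a witness for $\phi$ inside $u$ itself (the paper phrases this contrapositively, but the case analysis and witnesses are identical). Your explicit appeal to reflexivity of $\sim_b$ in the de dicto case is in fact a cleaner justification than the paper's, which at that point cites the relation $w\sim_a u$ rather than the reflexivity of the indistinguishability relation of the witnessing agent.
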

\begin{proof}
Suppose that $w,a\nVdash \D\phi$. Thus, by item~6 of Definition~\ref{sat}, there exists a world $u\in W$ such that 
\begin{equation}\label{28-dec-a}
w\sim_a u    
\end{equation}
and
\begin{equation}\label{28-dec-b}
\forall b\in P_u\;(u,b\nVdash \phi).    
\end{equation}
At the same time, by the assumption of the lemma and the same item~6 of Definition~\ref{sat}, there exists an agent $c\in P_u$ such that
$u,c\Vdash \R\phi\vee\D\phi$.
Thus, one of the following two cases takes place:

\vspace{1mm}\noindent
{\em Case I:} $u,c\Vdash \R\phi$. Hence, by item~5(a) of Definition~\ref{sat}, there exists an agent $d\in P_u$ such that $u,d\Vdash \phi$, which contradicts statement~\eqref{28-dec-b}.

\vspace{1mm}\noindent
{\em Case II:}  $u,c\Vdash \D\phi$. Hence, by item~6 of Definition~\ref{sat} and statement~\eqref{28-dec-a}, there exists an agent $d\in P_u$ such that $u,d\Vdash \phi$, which contradicts statement~\eqref{28-dec-b}.
\end{proof}

In the rest of the paper, we prove the completeness of our logical system.

\section{$\lambda$-assured sets}

In this section, we introduce a technical notion of a $\lambda$-assured set that will be used in the next section.
Throughout the rest of the paper, we use the notation $\A\phi$ to denote the formula $\R\phi\vee \D\phi$.
We read $\A\phi$ as ``is generally aware of $\phi$''. The General awareness axiom is essentially using this modality and is named after it. 

By $\A^n\phi$ we mean the formula $\underbrace{\A\dots\A}_\text{$n$ times}\phi$. In the special case $n=0$, the notation $\A^n\phi$ denotes formula $\phi$.

\begin{definition}\label{assured}
A set $X$ of formulae is $\lambda$-{\bf\em assured}
if $X\nvdash \A^n\neg\lambda$ for each $n\ge 0$.   
\end{definition}
To develop an intuition about $\lambda$-assured sets, let $X$ be a maximal consistent set of formulae and $\lambda$ be the property ``is not a spy''. Thus, $\neg\lambda$ is the property ``is a spy''. The formula $\A\neg\lambda$ means the agent is generally aware of a spy. The formula $\neg\A\neg\lambda$ means that the agent is not aware of a spy. The formula $\neg\A\A\neg\lambda$ means that the spy is embedded so well that the agent is not even aware of anyone who is aware of a spy. The formula $\neg\A\A\A\neg\lambda$ allows only the existence of ``super spies'' who are hidden so well that the agent is not aware of anyone aware of anyone aware of a spy. The notion of $\lambda$-assurance captures the fact that only the existence of absolutely undetectable ``ghost'' spies is consistent with set $X$. In other words, it says that all ``detectable'' agents in the setting captured by set $X$ must have property $\lambda$.

\section{Completeness}

\subsection{Frames}

Traditionally, proofs of the completeness in modal logic use a canonical model in which worlds are defined as maximal consistent sets of formulae. At the core of such proofs is a ``truth'' lemma stating that $\phi\in w$ if and only if $w\Vdash \phi$. This approach is not easy to apply to 2D-semantics as it requires a ``decoupling'' of a maximal consistent set into a world and an agent. In this paper, we use the ``matrix'' technique for such decoupling recently proposed by~\citet{nt23jsl}. The technique consists of building the canonical model as a matrix, whose rows correspond to worlds and whose columns correspond to agents. The elements of the matrix are maximal consistent sets representing all formulae that are satisfied in a given world-agent combination. \cite{nt23jsl} proves the completeness of a logical system for ``telling apart'' modality. This system contains modality $\K$, but it does not contain awareness modalities and it does not deal with de re/de dicto distinction. 

In this paper, we adapt the matrix technique~\citet{nt23jsl} in a novel way: frames include an explicit awareness relation, row labels $\lambda_w$, and the requirement that each $X_{wa}$ be $\lambda_w$-assured. To emphasise these additions, we refer to our matrices as ``frames''.

\begin{definition}\label{frame} 
\!\! A frame is a tuple
$(\alpha,\beta,\lambda,P,X,\sim,\rightsquigarrow)$, where
\begin{enumerate}
    \item $\alpha$, $\beta$ are ordinals and $\lambda_w\in\Phi$ is a formula for each $w<\alpha$,
    \item $P\subseteq \alpha\times\beta$ is a ``presence'' relation; we read $(w,a)\in P$ as ``agent $a$ is present at world $w$''; we slightly abuse the notations and for each $w<\alpha$ and each $a<\beta$ by $P_w$ and $P_a$ we denote the set $\{b<\beta\mid (w,b)\in P\}$ and the set
    $\{u<\alpha\mid (u,a)\in P\}$, respectively,
    \item $X$ is a function that maps each pair $(w,a)\in P$ into a $\lambda_w$-assured maximal consistent set of formulae denoted by $X_{wa}$,
    \item $\sim_a$ is an ``indistinguishability'' equivalence relation  on the set $P_a$ for each $a<\beta$ such that for any $w,u<\alpha$ and any formula $\phi\in\Phi$,
    \begin{enumerate}
    \item if $w\sim_a u$, then $\K\phi\in X_{wa}$ iff $\K\phi\in X_{ua}$,
    \end{enumerate}
    \item $\rightsquigarrow_w$ is a reflexive ``awareness'' relation on the set $P_w$ for each $w<\alpha$ such that for any $u<\alpha$, any $a,b<\beta$, and any formula $\phi\in\Phi$,
    \begin{enumerate}
        \item if $a\rightsquigarrow_w b$ and $w\sim_a u$, then $a\rightsquigarrow_u b$,
        \item if $a\rightsquigarrow_w b$ and $\R\phi\notin X_{wa}$, then $\phi\notin X_{wb}$.
    \end{enumerate} 
\end{enumerate}
\end{definition}

In linear algebra, matrices usually have a finite number of rows and a finite number of columns. In this paper, we allow infinite matrices with $\alpha$ rows and $\beta$ columns, where $\alpha$ and $\beta$ are two ordinals. Recall that elements of an ordinal $\alpha$ are ordinals smaller than $\alpha$. For example, $0=\varnothing$, $1=\{0\}$, $2=\{0,1\}$, \dots $\omega=\{0,1,2,\dots\}$, $\omega+1=\{0,1,2,\dots,\omega\}$. If a matrix has $\alpha$ rows, then we assume that the rows are indexed by the elements of ordinal $\alpha$. For example, a three-row matrix has row 0, row 1, and row 2.

Informally, a ``matrix'' is usually defined as a table.  Formally, a matrix is a function $X$ on the Cartesian product of the set of rows and the set of columns. Following the tradition, we use the notation $X_{wa}$ to denote the value of the matrix function $X$ on the pair $(w,a)$. Note that we define $X$ as a {\em total} function on the set $\alpha\times\beta$. Thus, $X_{w,a}$ is defined even if $(w,a)\notin P$. This is done only to avoid constant references to the domain of $X$ in the proofs. If $(w,a)\notin P$, then it is not significant for our proof which exactly $\lambda_w$-assured maximal consistent set is~$X_{wa}$.

There is a significant difference between the way awareness is treated in epistemic models (Definition~\ref{epistemic model definition}) and frames (Definition~\ref{frame}). Intuitively, an agent $a$ is ``aware'' of an agent $b$ in an epistemic model if agent $b$ is present in all worlds that agent $a$ cannot distinguish from the current world.  As we will see later, a frame represents a {\em partially constructed} model. Thus, some of the worlds (rows) might be missing and they will be added later. If we attempt to define awareness in frames the same way as it is done in epistemic models, an agent might become ``unaware'' of another agent after a new possible world is introduced. To make our construction work properly, we want to avoid this ``loss of awareness effect''. 
This problem did not exist in work~\cite{nt23jsl} that does not deal with awareness. 
{\bf To achieve this goal, in this paper we equip our frames with an awareness relation $a\rightsquigarrow_w b$.} Intuitively, it means that in world $w$ agent $a$ is aware of agent $b$. Item 5(a) of Definition~\ref{frame} states that if agent $a$ is aware of $b$ in the current world, then $a$ is aware of $b$ in each indistinguishable world. 

Note that there are two distinct references to awareness in our frame. One of them is semantical: through relation $\rightsquigarrow_w$ on the columns. The other is syntactical, through modalities $\R$ and $\D$ occurring in the formulae from a set $X_{wa}$. Item 5(b) of Definition~\ref{frame} connects these two references to awareness. It states: if $a\rightsquigarrow_w b$ and 
$\phi\in X_{wb}$,
then $\R\phi\in X_{wa}$. That is: if $a$ is semantically aware of $b$ and $b$ has property $\phi$, then $a$ is syntactically (de re) aware of someone with property $\phi$. In Definition~\ref{frame}, we state this item in contrapositive form for ease of use.

\subsection{Complete Frames}

As briefly mentioned in the previous subsection, frames represent partially built models. In order to be convertible into a canonical model a frame must be complete.

\begin{definition}\label{complete frame}
A frame  $(\alpha,\beta,\lambda,P,X,\sim,\rightsquigarrow)$ is {\bf\em complete} if for each $(u,b)\in P$ and each formula $\phi\in\Phi$,
\begin{enumerate}
    \item if $\K\phi\notin X_{ub}$, then there is  $v\in P_b$ such that $u\sim_b v$ and $\phi\notin X_{vb}$,
    \item if $\R\phi\in X_{ub}$, then there is $c\in P_u$ such that $b\rightsquigarrow_u c$ and $\phi\in X_{uc}$,
    \item if $b\not\rightsquigarrow_u c$, then there is $v\in P_b$ such that $u\sim_b v$ and  $c\notin P_v$,
    \item if $\D\phi\notin X_{ub}$, then there is $v\in P_b$ such that $u\sim_b v$, and $\lambda_v=\neg\phi$,
    \item if $\D\phi\in X_{ub}$, then there is $c\in P_u$ and $\phi\in X_{uc}$.
\end{enumerate}
\end{definition}

A frame $(\alpha,\beta,\lambda,P,X,\sim,\rightsquigarrow)$ is {\bf finite} if ordinals $\alpha$ and $\beta$ are finite.
In Lemma~\ref{complete frame exists lemma}, we prove that any finite frame can be extended to a complete frame. The formal definition of an extension is below. 

\begin{definition}\label{extension definition}
A frame $(\alpha',\beta',\lambda',P',X',\sim',\rightsquigarrow')$  is an {\bf\em extension} of a frame $(\alpha,\beta,\lambda,P,X,\sim,\rightsquigarrow)$ if
\begin{enumerate}
    \item $\alpha\le \alpha'$ and $\beta\le \beta'$,
    \item $\lambda'_w=\lambda_w$ for each $w<\alpha$,
    \item $P'\cap (\alpha\times\beta)=P$,
    \item $X'_{wa}=X_{wa}$ for $(w,a)\in P$,
    \item $w_1\sim_aw_2$ iff $w_1\sim'_a w_2$ for each $a<\beta$ and each $w_1,w_2\in P_a$,
    \item $a_1\rightsquigarrow_w a_2$ iff $a_1\rightsquigarrow'_w a_2$ for each $w<\alpha$ and each $a_1,a_2\in P_w$.
\end{enumerate}
\end{definition}

The proofs of the next four lemmas can be found in the appendix.

\begin{lemma}\label{type 5 lemma}
For any finite frame $(\alpha,\beta,\lambda,P,X,\sim,\rightsquigarrow)$, any $(u,b)\in P$, and any formula $\D\phi\in X_{ub}$, there is an extension $(\alpha,\beta+1,\lambda',P',X',\sim',\rightsquigarrow')$ such that $\phi\in X'_{u\beta}$.  
\end{lemma}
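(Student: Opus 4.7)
The plan is to construct the extension by introducing a single new agent with column index $\beta$, present in the new frame at exactly one world—namely $u$—and to populate its slot there with a $\lambda_u$-assured maximal consistent set containing $\phi$. Concretely, I will set $P' = P \cup \{(u, \beta)\}$, keep $\sim'_a \,=\, \sim_a$ for every $a < \beta$, take $\sim'_\beta$ to be the trivial equivalence $\{(u, u)\}$ on $P'_\beta = \{u\}$, keep $\rightsquigarrow'_w \,=\, \rightsquigarrow_w$ for $w \neq u$, and set $\rightsquigarrow'_u \,=\, \rightsquigarrow_u \cup \{(\beta, \beta)\}$, so that reflexivity is preserved without introducing any new awareness edges between old agents and the new one.

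The heart of the argument is to produce the set $X'_{u\beta}$, and I will first show that the singleton $\{\phi\}$ is itself $\lambda_u$-assured. Suppose for contradiction that $\phi \vdash \A^n\neg\lambda_u$ for some $n \geq 0$. The Monotonicity rule for $\D$ gives $\vdash \D\phi \to \D\A^n\neg\lambda_u$, and iterated application of the General Awareness axiom $\D\A\psi \to \D\psi$ collapses this to $\vdash \D\phi \to \D\neg\lambda_u$. Since $\D\psi \to \A\psi$ follows immediately from the definition $\A\psi \equiv \R\psi \vee \D\psi$, I obtain $\vdash \D\phi \to \A\neg\lambda_u$. But $\D\phi \in X_{ub}$ then yields $X_{ub} \vdash \A^1 \neg\lambda_u$, contradicting the $\lambda_u$-assurance of $X_{ub}$. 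Having established this, I will invoke the Lindenbaum-style extension for $\lambda$-assured sets (needed anyway to make Definition~\ref{frame} non-vacuous, and thus supplied prior to the frame construction) to choose an MCS $Y \supseteq \{\phi\}$ that is still $\lambda_u$-assured, and set $X'_{u\beta} = Y$. For $w \neq u$, I take $X'_{w\beta}$ to be any $\lambda_w$-assured MCS, since its value plays no role in the extension properties.

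Verification of Definition~\ref{extension definition} is mechanical because no existing component of the frame is modified. For Definition~\ref{frame}, the only non-trivial clauses concern the new column. Condition~4(a) is vacuous for $\sim'_\beta$ as its only pair is $(u, u)$. Condition~5(a) for $\rightsquigarrow'_u$ is immediate, since the only new edge is the self-loop $(\beta, \beta)$ and $\sim'_\beta$ identifies $u$ only with itself. Condition~5(b) for that self-loop reduces to the implication ``$\R\psi \notin X'_{u\beta}$ implies $\psi \notin X'_{u\beta}$,'' which follows from the Self-Awareness theorem $\psi \to \R\psi$ together with the maximality of $X'_{u\beta}$; the old edges still satisfy 5(b) because the column-sets they connect are unchanged. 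The step I expect to be the main obstacle is precisely the $\lambda_u$-assurance argument for $\{\phi\}$, which is where the General Awareness axiom must be deployed in an essential and iterated fashion—the remainder of the construction is bookkeeping.
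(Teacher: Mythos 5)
Your proposal is correct and follows essentially the same route as the paper's proof: the same choice of $P'$, $\sim'_\beta=\{(u,u)\}$, $\rightsquigarrow'_u=\rightsquigarrow_u\cup\{(\beta,\beta)\}$, the same frame-condition checks (with 5(b) via Self-Awareness), and the same core argument using Monotonicity for $\D$ together with the General Awareness axiom to contradict the $\lambda_u$-assurance of $X_{ub}$. The only cosmetic differences are that the paper inlines your ``assured Lindenbaum'' step by directly proving that $\{\phi\}\cup\{\neg\A^n\neg\lambda_u\mid n\ge 0\}$ is consistent (using Monotonicity of $\A$ to collapse the finite disjunctions that arise) before applying the ordinary Lindenbaum lemma, and that it derives the contradiction at level $n+1$ rather than collapsing to level $1$ as you do.
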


\begin{lemma}\label{type 1 lemma}
    For any finite frame $(\alpha,\beta,\lambda,P,X,\sim,\rightsquigarrow)$, any $(u,b)\in P$, and any formula $\K\phi\notin X_{ub}$,  there is an extension $(\alpha+1,\beta,\lambda',P',X',\sim',\rightsquigarrow')$ such that
    (i) $u\sim'_b \alpha$,
    (ii) $\phi\not\in X'_{\alpha b}$, and
    (iii) for each $c<\beta$, if $b\not\rightsquigarrow_u c$, then $c\notin P'_\alpha$. 
\end{lemma}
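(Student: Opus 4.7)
My plan is to append a single new row, indexed by $\alpha$ (so $\alpha' := \alpha + 1$ and $\beta' := \beta$), labelled $\lambda'_\alpha := \top$. With this choice, $\lambda'_\alpha$-assuredness in Definition~\ref{assured} collapses to ordinary consistency, because the axioms $\neg\R\bot$, $\neg\D\bot$ together with the monotonicity rules for $\R$ and $\D$ yield $\vdash\A^n\bot\to\bot$ for every $n\ge 0$. Consequently any maximal consistent set is automatically $\top$-assured, which frees me to place arbitrary maximal consistent sets in row $\alpha$.

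The critical cell is $(\alpha,b)$. Let $\Gamma := \{\neg\phi\} \cup \{\psi \mid \K\psi \in X_{ub}\}$. A standard modal argument shows $\Gamma$ is consistent: if some finite $\{\psi_1,\dots,\psi_n\}\cup\{\neg\phi\}$ were inconsistent, then Necessitation, Distributivity, and the derivable rule $\vdash\K\psi_1\wedge\dots\wedge\K\psi_n\to\K(\psi_1\wedge\dots\wedge\psi_n)$ would give $X_{ub}\vdash\K\phi$, contradicting $\K\phi\notin X_{ub}$ by maximal consistency. By Lemma~\ref{Lindenbaum's lemma}, extend $\Gamma$ to a maximal consistent set $Y$ and put $X'_{\alpha b} := Y$; this delivers (ii) on the nose, since $\neg\phi\in Y$.

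Item~5(a) of Definition~\ref{frame} forces every $c$ with $b\rightsquigarrow_u c$ to lie in $P'_\alpha$, while condition~(iii) forbids any other column; so I set $P'_\alpha := \{c<\beta \mid b\rightsquigarrow_u c\}$, which contains $b$ by reflexivity of $\rightsquigarrow_u$. The main obstacle I anticipate is populating $X'_{\alpha c}$ for $c\in P'_\alpha\setminus\{b\}$ in a way that satisfies item~5(b) along the forced edges $b\rightsquigarrow'_\alpha c$, which would in general demand $X'_{\alpha c}\subseteq\{\psi\mid\R\psi\in X'_{\alpha b}\}$. The clean resolution is to reuse the same set: put $X'_{\alpha c}:=Y$ for every $c\in P'_\alpha$. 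Then item~5(b) becomes immediate from the Self-Awareness axiom $\psi\to\R\psi$, since $\psi\in X'_{\alpha c}=Y$ gives $\R\psi\in Y=X'_{\alpha a}$ on every edge $a\rightsquigarrow'_\alpha c$ I introduce. Accordingly I take $\rightsquigarrow'_\alpha$ to be reflexivity together with the edges $b\rightsquigarrow'_\alpha c$ for every $c\in P'_\alpha$.

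For indistinguishability, I place $\alpha$ into the $\sim'_b$-equivalence class of $u$, which secures (i), and for every other $c\in P'_\alpha$ I make $\{\alpha\}$ a fresh singleton $\sim'_c$-class, rendering item~4(a) vacuous for those agents. The one nontrivial case of item~4(a) is the new edge $u\sim'_b\alpha$: for $\K\psi\in X_{ub}\Rightarrow\K\psi\in Y$ I invoke Positive Introspection (derivable, cf.\ Lemma~\ref{positive introspection lemma} in the appendix) to conclude $\K\K\psi\in X_{ub}$, whence $\K\psi\in\Gamma\subseteq Y$; for the converse I invoke Negative Introspection, so that $\K\psi\notin X_{ub}$ gives $\K\neg\K\psi\in X_{ub}$, hence $\neg\K\psi\in\Gamma\subseteq Y$ and so $\K\psi\notin Y$ by consistency. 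Verifying the extended item~5(a) is then routine case-analysis on whether $w$ or $v$ equals $\alpha$, each case reducing to the original item~5(a) applied at $u$, and conditions~(i)--(iii) have already been established along the way.
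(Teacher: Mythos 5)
Your proof is correct and follows the paper's construction in all essentials: you append a single new row $\alpha$ with $\lambda'_\alpha=\top$, set $P'_\alpha=\{c<\beta\mid b\rightsquigarrow_u c\}$, seed the cell $(\alpha,b)$ with a maximal consistent extension $Y$ of $\{\neg\phi\}\cup\{\psi\mid\K\psi\in X_{ub}\}$, prove the $\K$-agreement between $X_{ub}$ and $Y$ via Positive and Negative Introspection, link $u\sim'_b\alpha$, and take $\rightsquigarrow'_\alpha$ to be the reflexive closure of the edges inherited from $b\rightsquigarrow_u{}$. The one place you genuinely deviate is in populating the remaining cells of the new row: the paper invokes Lemma~\ref{bridge builder} to manufacture a second maximal consistent set $Z$ with $\{\neg\psi\mid\R\psi\notin Y\}\subseteq Z$ and places $Z$ in every column $c\neq b$ of row $\alpha$, whereas you simply reuse $Y$ itself. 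Your shortcut is sound: with every cell of row $\alpha$ equal to $Y$, condition~5(b) of Definition~\ref{frame} reduces on all new edges to ``if $\R\psi\notin Y$ then $\psi\notin Y$,'' which is exactly the contrapositive of the Self-Awareness axiom applied inside the maximal consistent set $Y$; and no other frame condition distinguishes the columns of the new row, since condition~4(a) is trivial for $c\neq b$ once $\{\alpha\}$ is a singleton $\sim'_c$-class (which is also what the paper's reflexive closure produces). What the paper's detour through the bridge builder buys here is only uniformity with Lemma~\ref{type 2 lemma}, where a genuinely new set is unavoidable because the cells of a new \emph{column} must each be tied, via item~5(b), to a different pre-existing set $X_{wb}$; for the present lemma your more economical choice works just as well and shortens the verification.
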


\begin{lemma}\label{type 2 lemma}
For any finite frame $(\alpha,\beta,\lambda,P,X,\sim,\rightsquigarrow)$, any $(u,b)\in P$, and any formula $\R\phi\in X_{ub}$, there is an extension $(\alpha,\beta+1,\lambda',P',X',\sim',\rightsquigarrow')$ such that
$b\rightsquigarrow'_u\beta$ and
$\phi\in X'_{u\beta}$.      
\end{lemma}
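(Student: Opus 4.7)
The plan is to extend the frame by adding one new agent $\beta$, installing it at every world of the $\sim_b$-equivalence class $[u]_{\sim_b}$, and declaring $b$ (and $\beta$ itself) to be aware of $\beta$ at every such world. The new maximal consistent sets $X'_{v\beta}$ are then built so that item~5(b) of Definition~\ref{frame} is respected by construction.

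For the structural data I set $\alpha'=\alpha$, $\beta'=\beta+1$, $\lambda'_w=\lambda_w$, and $P'=P\cup\{(v,\beta)\mid v\in[u]_{\sim_b}\}$. I keep $\sim'_a$ equal to $\sim_a$ for $a<\beta$ and let $\sim'_\beta$ be the identity relation on $[u]_{\sim_b}$. I let $\rightsquigarrow'_v$ coincide with $\rightsquigarrow_v$ on old agents and, for $v\in[u]_{\sim_b}$, additionally contain the pairs $(b,\beta)$ and $(\beta,\beta)$. Closure of $[u]_{\sim_b}$ under $\sim_b$ then takes care of item~5(a) for the new pairs $(b,\beta)$, while items~4(a) and 5(a) for $\beta$ itself are trivial because $\sim'_\beta$ is the identity.

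The heart of the construction is the choice of $X'_{v\beta}$ for $v\in[u]_{\sim_b}$. Item~5(b) applied to $b\rightsquigarrow'_v\beta$ forces the inclusion $F_v\subseteq X'_{v\beta}$, where $F_v:=\{\neg\psi\mid\R\psi\notin X_{vb}\}$. The main obstacle, and the one nontrivial step, is to establish consistency of $F_v\cup\{\neg\A^n\neg\lambda_v\mid n\ge 0\}$ for each $v\in[u]_{\sim_b}$, and in the case $v=u$ of $F_u\cup\{\phi\}\cup\{\neg\A^n\neg\lambda_u\mid n\ge 0\}$. Assuming toward contradiction that $F_u\cup\{\phi\}\vdash\A^n\neg\lambda_u$, compactness and propositional reasoning yield some $\chi_1,\dots,\chi_k$ with $\R\chi_i\notin X_{ub}$ and a theorem $\vdash\phi\to(\chi_1\vee\dots\vee\chi_k\vee\A^n\neg\lambda_u)$. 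The Monotonicity rule for $\R$ followed by iterated Disjunctivity gives $\vdash\R\phi\to(\R\chi_1\vee\dots\vee\R\chi_k\vee\R\A^n\neg\lambda_u)$. Because $\R\phi\in X_{ub}$ while no $\R\chi_i$ is, we conclude $\R\A^n\neg\lambda_u\in X_{ub}$; since $\R\psi$ is a disjunct of $\A\psi$, this gives $\A^{n+1}\neg\lambda_u\in X_{ub}$, contradicting $\lambda_u$-assurance of $X_{ub}$. The $v\ne u$ case is analogous, with the theorem $\vdash\chi_1\vee\dots\vee\chi_k\vee\A^n\neg\lambda_v$ promoted to an $\R$-theorem via Self-Awareness in place of $\R$-monotonicity applied to $\R\phi$.

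Lemma~\ref{Lindenbaum's lemma} then extends each consistent set above to a $\lambda_v$-assured MCS, which I take as $X'_{v\beta}$; for the remaining pairs $(v,\beta)\notin P'$ any $\lambda_v$-assured MCS will do. The other clauses of Definitions~\ref{frame} and~\ref{extension definition} are immediate: for $a<\beta$ nothing has changed, and on the new column item~5(b) for $b\rightsquigarrow'_v\beta$ holds by $F_v\subseteq X'_{v\beta}$, while item~5(b) for $\beta\rightsquigarrow'_v\beta$ reduces to the Self-Awareness axiom inside the MCS $X'_{v\beta}$. The required $b\rightsquigarrow'_u\beta$ and $\phi\in X'_{u\beta}$ are built into the construction.
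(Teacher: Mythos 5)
Your construction is correct and is essentially the paper's own: the same presence set $P\cup\{(v,\beta)\mid u\sim_b v\}$, identity relation for $\sim'_\beta$, awareness pairs $(b,\beta)$ and $(\beta,\beta)$ on the whole $\sim_b$-class, and maximal consistent sets on the new column generated by $\{\neg\psi\mid\R\psi\notin X_{vb}\}$ (plus $\phi$ at $u$). The only difference is packaging: the paper routes the consistency-and-assurance step through its ``bridge builder'' lemma (which gets $\lambda_v$-assurance for free from $\R\A^n\neg\lambda_v\notin X_{vb}$), whereas you inline the same Disjunctivity/Monotonicity argument with the formulas $\neg\A^n\neg\lambda_v$ added explicitly.
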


\begin{lemma}\label{type 4 lemma}
For any finite frame $(\alpha,\beta,\lambda,P,X,\sim,\rightsquigarrow)$, any $(u,b)\in P$, and any formula $\D\phi\notin X_{ub}$,
there is an extension $(\alpha+1,\beta,\lambda',P',X',\sim',\rightsquigarrow')$ such that $u\sim'_b \alpha$ and $\lambda'_\alpha$ is equal to $\neg\phi$. 
\end{lemma}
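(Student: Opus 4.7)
The plan is to add a new row at index $\alpha$ with $\lambda'_\alpha=\neg\phi$, to extend $\sim_b$ by inserting $\alpha$ into the $\sim_b$-equivalence class of $u$, and to set $P'_\alpha := C=\{c<\beta : b\rightsquigarrow_u c\}$ with $\rightsquigarrow'_\alpha=\{(b,c):c\in C\}\cup\{(c,c):c\in C\}$; for each $c\in C\setminus\{b\}$ I put $\alpha$ into its own singleton $\sim'_c$-class, so item~4(a) of Definition~\ref{frame} imposes no constraint relating $X'_{\alpha c}$ to existing columns. The inclusion $P'_\alpha\supseteq C$ is forced by item~5(a): since $b\rightsquigarrow_u c$ and the new pair $u\sim'_b\alpha$ will hold, awareness must propagate to the new row; note $b\in C$ by reflexivity.

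The central technical step is to produce a $\neg\phi$-assured maximal consistent $X'_{\alpha b}$ that agrees with $X_{ub}$ on every $\K$-formula, so that item~4(a) is satisfied along $u\sim'_b\alpha$. I apply Lemma~\ref{Lindenbaum's lemma} to
\begin{equation*}
Y_b\;=\;\{\chi:\K\chi\in X_{ub}\}\cup\{\neg\A^n\phi:n\geq 0\}.
\end{equation*}
The consistency of $Y_b$ is the crux. Self-Awareness $\phi\to\R\phi$ gives $\psi\to\A\psi$, hence $\A^n\phi\to\A^{n+1}\phi$, so if $Y_b$ were inconsistent then $\{\chi:\K\chi\in X_{ub}\}\vdash\A^N\phi$ for the largest $N$ among the finitely many $\neg\A^n\phi$ actually used. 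Necessitation, Distributivity, and closure of $\K$ under conjunction then yield $\K\A^N\phi\in X_{ub}$; the second Self-Awareness axiom $\K\A^N\phi\to\D\A^N\phi$ together with $N$ applications of General Awareness $\D\A\psi\to\D\psi$ delivers $\D\phi\in X_{ub}$, contradicting the hypothesis of the lemma. Positive and Negative Introspection guarantee that every $\K$-formula of $X_{ub}$ (and its negation, respectively) already lies in $Y_b$, so any Lindenbaum extension $X'_{\alpha b}$ has exactly the same $\K$-formulas as $X_{ub}$, and its $\neg\phi$-assurance is built in.

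For each $c\in C\setminus\{b\}$, item~5(b) along $b\rightsquigarrow'_\alpha c$ requires $X'_{\alpha c}\subseteq\mathcal{R}:=\{\phi':\R\phi'\in X'_{\alpha b}\}$. I construct such an $X'_{\alpha c}$ by a Lindenbaum-style enumeration preserving the invariant ``$\R(\bigwedge F)\in X'_{\alpha b}$ for the current finite partial description $F$''. The base case uses that $\R\top$ is a theorem (via $\top\to\R\top$); at each step the propositional equivalence $\bigwedge F\leftrightarrow(\bigwedge F\wedge\chi_n)\vee(\bigwedge F\wedge\neg\chi_n)$ together with monotonicity of $\R$ and the Disjunctivity axiom forces one of $\R(\bigwedge F\wedge\chi_n)$, $\R(\bigwedge F\wedge\neg\chi_n)$ to lie in $X'_{\alpha b}$, and $F$ is extended on that side. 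Since $X'_{\alpha b}$ is $\neg\phi$-assured it contains $\neg\R\A^n\phi$ for every $n$, so $\A^n\phi\notin\mathcal{R}$ and hence $\neg\A^n\phi\in X'_{\alpha c}$; the required $\neg\phi$-assurance is automatic.

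What remains is a routine verification of Definitions~\ref{frame} and~\ref{extension definition}; the only non-obvious clauses are item~4(a) along the new $\sim'_b$-link (handled in the second paragraph) and item~5(b) at the reflexive pairs $(c,c)$, which follows at once from Self-Awareness $\phi\to\R\phi$. The principal obstacle is the consistency argument for $Y_b$: converting the purely $\D$-based hypothesis $\D\phi\notin X_{ub}$ into $\neg\phi$-assurance of a freshly introduced column requires threading through Self-Awareness, the $\K\to\D$ half of Self-Awareness, and iterated General Awareness in precisely the right order.
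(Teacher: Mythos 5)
Your proposal is correct and follows essentially the same route as the paper's proof: the same frame extension (new row $\alpha$ with presence set $\{c \mid b\rightsquigarrow_u c\}$, $\alpha$ joined to $u$'s $\sim_b$-class, singleton classes elsewhere), the same consistency argument for the new diagonal cell (monotonicity of $\A$, distributivity of $\K$ over derivations, the $\K\to\D$ Self-Awareness axiom, and iterated General Awareness, which is the paper's Lemma~\ref{A lemma}), and the same Disjunctivity-based transfer to the remaining cells, which the paper packages as the ``bridge builder'' lemma and you re-derive inline. The only differences are cosmetic (e.g., $\neg\A^n\phi$ versus $\neg\A^n\neg\neg\phi$, handled by monotonicity of $\A$ under provable equivalence).
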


We write $F\sqsubseteq F'$ if frame $F'$ is an extension of the frame~$F$. 
For any (finite or infinite) chain of frames $F_1\sqsubseteq F_2\sqsubseteq F_3\sqsubseteq F_4\sqsubseteq \dots$, where $F_i=(\alpha_i,\beta_i,\lambda_i,P_i,X_i,\sim_i,\rightsquigarrow_i)$, the limit $\lim_{i} F_i$ is the tuple $(\bigcup_i\alpha_i,\bigcup_i\beta_i,\bigcup_i\lambda_i,\bigcup_i P_i,\bigcup_i X_i,\bigcup_i \!\!\!\sim_i, \bigcup_i \!\!\!\rightsquigarrow_i)$. As usual, to compute the union of functions we treat them as functional relations (sets of pairs). The next lemma follows from Definition~\ref{extension definition}.

\begin{lemma}\label{limit lemma}
The limit of a chain of extensions $F_1\sqsubseteq F_2\sqsubseteq F_3\sqsubseteq \dots$ is an extension of the frame $F_1$.    
\end{lemma}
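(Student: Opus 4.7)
The plan is to verify separately that $F^{\ast}:=\lim_i F_i$ is a frame in the sense of Definition~\ref{frame} and that it satisfies the six clauses of Definition~\ref{extension definition} relative to $F_1$. The guiding observation is that every clause of both definitions is local: it involves only finitely many ordinals and at most one formula, so any witness or constraint to be verified already lives in some sufficiently large stage $F_i$ in the chain.

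First I would check that $F^{\ast}$ is well-defined and is a frame. The union of a chain of ordinals is an ordinal, so $\bigcup_i\alpha_i$ and $\bigcup_i\beta_i$ are ordinals. Clauses 2 and 4 of Definition~\ref{extension definition} guarantee that successive frames agree on their shared presence domain, so $\bigcup_i\lambda_i$ is a well-defined function and $\bigcup_i X_i$ is single-valued on $P^{\ast}$; outside $P^{\ast}$ one may fix any $\lambda_w$-assured maximal consistent set, exactly as Definition~\ref{frame} itself allows. In particular, each $X^{\ast}_{wa}$ coincides with $X_{i,wa}$ at the first stage $i$ at which $(w,a)$ enters $P$, and hence remains a $\lambda_w$-assured maximal consistent set. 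Reflexivity, symmetry, and transitivity of each $\sim_a$, as well as reflexivity of each $\rightsquigarrow_w$, transfer to the limit because any finite witness resides in some $F_i$. For conditions 4(a), 5(a), and 5(b) of Definition~\ref{frame}, any given instance involves only finitely many ordinals; picking a stage containing all of them and appealing to the fact that $F_i$ is already a frame supplies the required property.

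Second I would verify the extension clauses of Definition~\ref{extension definition} at $F^{\ast}$ over $F_1$. An easy induction using transitivity of $\sqsubseteq$ (immediate from Definition~\ref{extension definition}, since the intersections $P_j\cap(\alpha_1\times\beta_1)$ telescope) shows that each $F_i$ is an extension of $F_1$, so clauses 2, 4, 5, and 6 transfer pointwise to the limit. Clause 1 is immediate from $\alpha_1\le\bigcup_i\alpha_i$ and $\beta_1\le\bigcup_i\beta_i$, and clause 3 --- the equality $P^{\ast}\cap(\alpha_1\times\beta_1)=P_1$ --- holds because the corresponding equality $P_i\cap(\alpha_1\times\beta_1)=P_1$ holds at every stage of the chain. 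The argument is essentially bookkeeping and I foresee no serious obstacle; the one point needing a touch of care is that the label $\lambda_w$ is fixed at the first stage in which $w$ appears, so the $\lambda_w$-assurance established at that stage persists unchanged throughout the chain and into the limit.
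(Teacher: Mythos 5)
Your proposal is correct; the paper offers no proof of this lemma beyond the remark that it ``follows from Definition~\ref{extension definition},'' and your argument is precisely the routine verification being left implicit: transitivity of $\sqsubseteq$ plus the observation that every clause of Definitions~\ref{frame} and~\ref{extension definition} is witnessed at some finite stage of the chain and is preserved upward. Your attention to the one genuinely delicate point --- that $\bigcup_i X_i$ is single-valued on $P^{\ast}$ because clause~4 only pins down $X$ on the presence relation --- is a detail the paper itself glosses over.
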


\begin{definition}\label{type 1}
A {\bf\em Type 1} requirement is a tuple $(u,b,\phi)$, where $u,b<\omega$ and $\phi\in\Phi$. In a given frame  $(\alpha,\beta,\lambda,P,X,\sim,\rightsquigarrow)$ this requirement is
\begin{enumerate}
    \item {\bf\em active} if $u<\alpha$, $b<\beta$, and $(u,b)\in P$,
    \item {\bf\em fulfilled} if it is active and
    $u$, $b$, and $\phi$ satisfy item~1 of Definition~\ref{complete frame}. 
\end{enumerate}  
\end{definition}
The definition of
{\bf Type 2}, {\bf Type 4}, and {\bf Type 5} requirements are identical to the one above except that they refer to item~2, item~4, and item~5 of Definition~\ref{complete frame}. 

\begin{definition}\label{type 3}
A {\bf\em Type 3} requirement is a tuple $(u,b,c)$, where $u,b,c<\omega$. In a given frame  $(\alpha,\beta,\lambda,P,X,\sim,\rightsquigarrow)$ this requirement is
\begin{enumerate}
    \item {\bf\em active} if $u<\alpha$, $b,c<\beta$, $(u,b)\in P$, and $(u,c)\in P$,
    \item {\bf\em fulfilled} if it is active and
    $u$, $b$, and $c$ satisfy item~3 of Definition~\ref{complete frame}. 
\end{enumerate}  
\end{definition}

\begin{lemma}\label{Genie lemma}
Any finite frame that has an active unfulfilled requirement (of any type), can be extended to a finite frame where the same requirement is fulfilled.      
\end{lemma}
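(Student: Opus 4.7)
My plan is to proceed by case analysis on the type of the active unfulfilled requirement. For the four cases Type~1, Type~2, Type~4, and Type~5, the object to be fulfilled is exactly the condition stated in items 1, 2, 4, and 5 of Definition~\ref{complete frame} for the given tuple $(u,b,\phi)$; and the previously proven Lemmas~\ref{type 1 lemma}, \ref{type 2 lemma}, \ref{type 4 lemma}, and \ref{type 5 lemma} respectively produce a finite extension whose conclusion is exactly that condition. So in each of these four cases the lemma they deliver fulfills the given requirement verbatim. The only genuinely new case is Type~3.

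For a Type~3 requirement $(u,b,c)$ that is active and unfulfilled, I have $(u,b),(u,c)\in P$ and $b\not\rightsquigarrow_u c$ (otherwise item~3 of Definition~\ref{complete frame} is already vacuously satisfied), yet no existing world $v\in P_b$ with $u\sim_b v$ has $c\notin P_v$. The key observation is that the fresh row produced by Lemma~\ref{type 1 lemma} already excludes from its presence set every agent $c'<\beta$ with $b\not\rightsquigarrow_u c'$, by its clause~(iii). My plan is therefore to reduce Type~3 to Type~1 by invoking Lemma~\ref{type 1 lemma} with $\phi:=\bot$. The required side condition $\K\bot\notin X_{ub}$ is immediate from the Truth axiom $\K\bot\to\bot$ together with the consistency of the maximal consistent set $X_{ub}$. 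The lemma then supplies an extension $(\alpha+1,\beta,\lambda',P',X',\sim',\rightsquigarrow')$ with a new row $\alpha$ satisfying $u\sim'_b\alpha$; since $c<\beta$ (activity of the requirement) and $b\not\rightsquigarrow_u c$, clause~(iii) of the lemma yields $c\notin P'_\alpha$. Taking $v:=\alpha\in P'_b$ then witnesses item~3 of Definition~\ref{complete frame} for $(u,b,c)$, so the requirement is fulfilled in the extended frame.

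The main obstacle, were one to attack Type~3 directly, would be rebuilding a witness world from scratch and verifying clauses 4(a), 5(a), 5(b), and the $\lambda'$-assuredness of each $X'_{\alpha d}$ in Definition~\ref{frame} for the new row, together with the delicate choice of maximal consistent sets $X'_{\alpha d}$ that respect the matching of $\K$-formulas across $\sim'_b$ and the semantic-to-syntactic constraint imposed by $\rightsquigarrow'_\alpha$. All of this bookkeeping is already discharged inside the proof of Lemma~\ref{type 1 lemma}, so my reduction lets Type~3 inherit it for free, and the Genie lemma follows by the five-way case split outlined above.
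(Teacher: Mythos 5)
Your proposal is correct and follows essentially the same route as the paper: Types 1, 2, 4, and 5 are discharged directly by Lemmas~\ref{type 1 lemma}, \ref{type 2 lemma}, \ref{type 4 lemma}, and \ref{type 5 lemma}, and Type 3 is reduced to Lemma~\ref{type 1 lemma} with $\phi:=\bot$, using the Truth axiom and consistency of $X_{ub}$ to get $\K\bot\notin X_{ub}$ and clause~(iii) to obtain the witness row $\alpha$ with $c\notin P'_\alpha$. Your write-up is in fact slightly more explicit than the paper's about why the new row witnesses item~3 of Definition~\ref{complete frame}.
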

\begin{proof}
For requirements of Type~1, Type~2, Type~4, and Type~5, the statement of the lemma follows from Lemma~\ref{type 1 lemma}, Lemma~\ref{type 2 lemma}, Lemma~\ref{type 4 lemma}, and Lemma~\ref{type 5 lemma}, respectively.

In the case of Type~3 requirement, notice that $\K\bot\to\bot$ is an instance of the Truth axiom. Thus, $\K\bot\notin X_{ub}$ for each $(u,b)\in P$ because set $X_{ub}$ is consistent. Therefore, the statement of the lemma follows from Lemma~\ref{type 1 lemma}, where $\phi$ is $\bot$.
\end{proof}

The next lemma follows from Definition~\ref{extension definition} and the definition of a ``fulfilled'' requirement.
\begin{lemma}\label{extension fullfilled}
If a requirement (of any of the five types) is active and fulfilled in a frame, then it is also fulfilled in any extension of the frame.
\end{lemma}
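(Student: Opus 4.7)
The plan is to verify the claim separately for each of the five requirement types, and in each case to check that every ingredient mentioned by the ``fulfilled'' condition is preserved under extension. Because the only data involved --- membership in $X_{wa}$, the relations $\sim_a$ and $\rightsquigarrow_w$, the labels $\lambda_w$, and the presence relation --- is exactly what Definition~\ref{extension definition} pins down on the original domain, I expect the whole argument to reduce to routine bookkeeping rather than any substantive construction.

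First I would observe that activity is preserved: for any of the five types, the indices $u$ and $b$ (and $c$, for Type~3) fall inside the original $\alpha,\beta$ and hence inside $\alpha',\beta'$ by item~1 of Definition~\ref{extension definition}, while membership of $(u,b)$ (and $(u,c)$) in the presence relation transfers by item~3. Next I would collect the short list of ``agreement on the original domain'' facts that I intend to use throughout: $X'_{wa}=X_{wa}$ for $(w,a)\in P$ (item~4); $\sim'_a$ agrees with $\sim_a$ on $P_a$ (item~5); $\rightsquigarrow'_w$ agrees with $\rightsquigarrow_w$ on $P_w$ (item~6); $\lambda'_w=\lambda_w$ for $w<\alpha$ (item~2); and, by item~3, for $v<\alpha$ and $c<\beta$ the equivalence $c\in P_v\Leftrightarrow c\in P'_v$ holds.

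With these facts available, Types~1, 2, 4, and~5 are each a one-line transcription: the witness supplied by the ``fulfilled'' condition in $F$ continues to satisfy exactly the same clauses in $F'$, because every relevant set, relation, and label is left unchanged where it is queried. In particular, the antecedents of the implications in Definition~\ref{complete frame} (``$\K\phi\notin X_{ub}$'', ``$\R\phi\in X_{ub}$'', ``$\D\phi\notin X_{ub}$'', ``$\D\phi\in X_{ub}$'') are statements about $X_{ub}$, which coincides with $X'_{ub}$, so they retain their truth value as we pass from $F$ to $F'$.

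The one case I would handle with a little extra care is Type~3, whose hypothesis $b\not\rightsquigarrow_u c$ and conclusion $c\notin P_v$ are both negative. The main (mild) obstacle is to remember that the agreement clauses of Definition~\ref{extension definition} are \emph{iff} statements, so they propagate negations just as well as positive memberships: $b\not\rightsquigarrow_u c$ lifts to $b\not\rightsquigarrow'_u c$ by item~6, and $c\notin P_v$ lifts to $c\notin P'_v$ by item~3 together with $v<\alpha$ and $c<\beta$. Once this is noted, the Type~3 case closes in the same style as the others, and the five cases together yield the lemma.
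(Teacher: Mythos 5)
Your proof is correct and is exactly the routine verification the paper leaves implicit (the paper states only that the lemma ``follows from Definition~\ref{extension definition} and the definition of a fulfilled requirement''). You correctly identify the one point needing care --- that the agreement clauses of Definition~\ref{extension definition} are biconditionals on the original domain, so the negative conditions in Type~3 (and the negative antecedents elsewhere) transfer --- so nothing further is needed.
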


The lemma below follows from Definition~\ref{complete frame} and the definition of a ``fulfilled'' requirement.
\begin{lemma}\label{job finished}
If all active requirements (of all five types) are fulfilled in a frame, then the frame is complete.   
\end{lemma}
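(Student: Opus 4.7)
The plan is entirely a matter of unrolling definitions: the conclusion of Definition~\ref{complete frame} is built from five clauses, and for each clause there is a requirement type whose \emph{fulfilled} condition is, by construction, literally that clause. So I would simply verify the five items one by one.

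For clauses~1, 2, 4, and 5, I would fix any $(u,b)\in P$ and any formula $\phi\in\Phi$ (with, implicitly, $u,b<\omega$ so that the indices are legitimate requirement indices in the sense of Definition~\ref{type 1}). The tuple $(u,b,\phi)$ is then a requirement of the corresponding Type~$k\in\{1,2,4,5\}$. Since $u<\alpha$, $b<\beta$, and $(u,b)\in P$, the activity clause of Definition~\ref{type 1} (and of its analogues for Types~2, 4, 5) is satisfied verbatim, so this requirement is active. The hypothesis of the lemma then gives that it is fulfilled, and \emph{fulfilled} for Type~$k$ is defined to be exactly the statement of clause~$k$ of Definition~\ref{complete frame} at the chosen parameters. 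Hence that clause holds.

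Clause~3 is handled in the same way, but using a triple instead of a pair-plus-formula: fix $(u,b)\in P$ and any $c<\beta$ with $(u,c)\in P$, form the Type~3 requirement $(u,b,c)$, observe that its activity clause in Definition~\ref{type 3} matches the data chosen, and read off the conclusion of clause~3 of Definition~\ref{complete frame} directly from fulfilment.

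The main obstacle, such as it is, is purely notational bookkeeping: confirming that each activity clause matches the premise $(u,b)\in P$ of Definition~\ref{complete frame} (plus $(u,c)\in P$ for clause~3), and that each \emph{fulfilled} clause matches the corresponding conclusion. No new mathematical content is required, so the proof should be only a few lines long.
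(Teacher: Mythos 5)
Your proposal is correct and matches the paper's treatment: the paper offers no explicit proof, stating only that the lemma ``follows from Definition~\ref{complete frame} and the definition of a `fulfilled' requirement,'' which is precisely the definition-unrolling you carry out. Your parenthetical caveat that $u,b<\omega$ is the right thing to flag, since requirements are indexed by naturals while frames are indexed by arbitrary ordinals; this is harmless here because the lemma is only ever applied to limits of finite frames, where $\alpha,\beta\le\omega$.
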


\begin{lemma}\label{complete frame exists lemma}
Any finite frame can be extended to a complete frame.    
\end{lemma}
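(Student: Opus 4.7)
The plan is a standard dovetailing (bookkeeping) construction that repeatedly invokes Lemma~\ref{Genie lemma} and then takes a limit via Lemma~\ref{limit lemma}. Each requirement of Types~1, 2, 4, 5 is specified by a pair of natural numbers $(u,b) < \omega\times\omega$ together with a formula from the countable language $\Phi$, and each Type~3 requirement is a triple in $\omega^3$; so the set of all requirements (of all five types) is countable. Fix an enumeration $R_1, R_2, R_3, \dots$ of this set.

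Starting from the given finite frame $F_0$, I would build a chain $F_0 \sqsubseteq F_1 \sqsubseteq F_2 \sqsubseteq \dots$ inductively as follows. At stage $n \geq 1$, go through the requirements $R_1, R_2, \dots, R_n$ in order; for each one that is currently \emph{active} but \emph{unfulfilled}, apply Lemma~\ref{Genie lemma} to extend the frame so that this requirement becomes fulfilled. Lemma~\ref{extension fullfilled} guarantees that requirements already fulfilled in an earlier stage remain fulfilled throughout this process, so in particular they are not undone by subsequent Lemma~\ref{Genie lemma} applications within the same stage. Each individual extension in Lemma~\ref{Genie lemma} adds at most one row or one column, so $F_n$ remains finite at every stage. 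Let $F = \lim_n F_n$; by Lemma~\ref{limit lemma}, $F$ is an extension of $F_0$.

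To show $F$ is complete, by Lemma~\ref{job finished} it suffices to show that every active requirement of $F$ is fulfilled in $F$. So fix an active requirement $R = R_m$ in $F$. Its finitely many world/agent indices lie in $\bigcup_i \alpha_i$ and $\bigcup_i \beta_i$, hence there is some stage $N$ by which all of these indices have been introduced, making $R$ active in $F_N$ and in every subsequent frame (since activation is preserved by extension, as $P' \cap (\alpha\times\beta) = P$ by Definition~\ref{extension definition}). Pick any $n \geq \max(m, N)$. During stage $n$ the algorithm processes $R_m$: if $R$ is still unfulfilled, Lemma~\ref{Genie lemma} fulfills it; otherwise it is already fulfilled. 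In either case, $R$ is fulfilled in $F_n$, and then by Lemma~\ref{extension fullfilled} it remains fulfilled in $F$.

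The main obstacle I anticipate is not the combinatorial scheduling but the verification that the limit $F$ actually satisfies Definition~\ref{frame} — in particular that $\bigcup_i \!\!\sim_i$ is still an equivalence relation on each $P_a$, that $\bigcup_i \!\!\rightsquigarrow_i$ still satisfies items~5(a)--(b) of Definition~\ref{frame}, and that each $X_{wa}$ in the limit remains $\lambda_w$-assured. All three facts reduce to the observation that Definition~\ref{extension definition} requires extensions to preserve, not merely contain, the relations and the labels $\lambda_w$ and sets $X_{wa}$ on indices that already exist; so the chain is strictly conservative on each existing cell and no conflict can arise when the relations and functions are unioned, making the verification of the frame conditions at the limit routine.
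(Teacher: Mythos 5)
Your proof is correct and follows essentially the same route as the paper's: enumerate the countably many requirements, repeatedly apply Lemma~\ref{Genie lemma} to fulfill active unfulfilled ones while keeping each frame finite, and pass to the limit using Lemmas~\ref{limit lemma}, \ref{extension fullfilled}, and \ref{job finished}. The only difference is your stage-based scheduling (processing $R_1,\dots,R_n$ at stage $n$) versus the paper's choice of always fixing the least-indexed active unfulfilled requirement, which is an immaterial variation of the same dovetailing argument.
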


\begin{proof}
Let $F$ be an arbitrary finite frame.
Observe that there are countably many requirements of each of the five types.
Let $r_1,r_2,r_3,\dots$ be an enumeration of all requirements of all five types (combined). We define a (possibly infinite) chain of {\em finite} frames $F_1\sqsubseteq F_2\sqsubseteq \dots$ recursively:
\begin{enumerate}
    \item $F_1=F$,
    \item if frame $F_n$ does not contain any active unfulfilled requirements, then $F_n$ is the last element of the chain,
    \item otherwise, let $r_{min}$ be the first (in terms of the enumeration $r_1,r_2,r_3,\dots$) active unsatisfied requirement in frame $F_n$; by Lemma~\ref{Genie lemma}, frame $F_n$ can be extended to a finite frame $F_{n+1}$ that fullfils requirement $r_{min}$.  
\end{enumerate}
\begin{claim}
Frame $\lim_n F_n$ is complete.    
\end{claim}
\begin{proof-of-claim}
Consider any requirement $r$ (of any of the five types). By Lemma~\ref{job finished}, it suffices to show that if requirement $r$ is active in frame $\lim_n F_n$, then it is fulfilled.

Indeed, if $r$ is active in $\lim_n F_n$, then (by definition of being ``active'') $r$ must be active in frame $F_i$ for some $i\ge 0$. Observe that, due to the construction of the chain $F_1\sqsubseteq F_2\sqsubseteq\dots$ if requirement $r$ is active in frame $F_i$, then it is fulfilled in frame $F_j$ for some $j\ge i$. Therefore, requirement $r$ is fulfilled in frame $\lim_n F_n$ by Lemma~\ref{limit lemma} and Lemma~\ref{extension fullfilled}.  
\end{proof-of-claim}
Frame $\lim_n F_n$ is an extension of the frame $F_1=F$ by Lemma~\ref{limit lemma}.
\end{proof}

\subsection{Canonical Model}

For any given frame $(\alpha,\beta,\lambda,P,X,\sim,\rightsquigarrow)$ we consider an epistemic model $(\alpha,\beta,P,\sim,\pi)$, where 
\begin{equation}\label{canonical pi}
\pi(p)=\{(w,a)\mid p\in X_{wa}\}.    
\end{equation}
Note that, in particular, worlds of the model are the elements of $\alpha$ and agents are the elements of $\beta$. The next lemma connects the epistemic model and the frame on which it is based. This lemma plays the role of a ``truth'' lemma in the classical proofs of completeness.

\begin{lemma}\label{truth lemma}
If frame $(\alpha,\beta,\lambda,P,X,\sim,\rightsquigarrow)$ is complete, then 
$w,a\Vdash \phi$ iff $\phi\in X_{wa}$ for any world $w<\alpha$, any agent $a\in P_w$, and any formula $\phi\in\Phi$.   
\end{lemma}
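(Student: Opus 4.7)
The plan is to prove the equivalence by induction on the structural complexity of $\phi$. The base case where $\phi$ is a propositional variable is immediate from the definition of $\pi$ in equation~(\ref{canonical pi}); the Boolean cases ($\neg$, $\to$) are routine given that each $X_{wa}$ is a maximal consistent set by item~3 of Definition~\ref{frame}. The four modal cases are where the five clauses of Definition~\ref{complete frame} and the frame conditions 4(a), 5(a), 5(b) of Definition~\ref{frame} earn their keep.

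For the $\K$-case, the ``syntax to semantics'' direction uses item~4(a) of Definition~\ref{frame} to propagate $\K\phi$ across $\sim_a$-related worlds and then the Truth axiom inside each maximal consistent set; the reverse direction is a contrapositive argument that extracts a witness world from item~1 of Definition~\ref{complete frame}. For the $\R$-case, the ``syntax to semantics'' direction uses item~2 of Definition~\ref{complete frame} to produce a witness column $c\in P_w$ with $a\rightsquigarrow_w c$ and $\phi\in X_{wc}$, followed by item~5(a) of Definition~\ref{frame} to propagate awareness of $c$ across $\sim_a$-related worlds and obtain $c\in P_u$ (using that $\rightsquigarrow_u$ is, by Definition~\ref{frame}, a relation on $P_u$). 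The reverse $\R$-direction is more delicate: given a semantic witness $b$ present in every $\sim_a$-indistinguishable world, one first establishes $a\rightsquigarrow_w b$ by contradiction through item~3 of Definition~\ref{complete frame}, and then invokes item~5(b) of Definition~\ref{frame} to obtain $\phi\notin X_{wb}$, contradicting the induction hypothesis.

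For the $\D$-case, the ``syntax to semantics'' direction uses the Introspection of Awareness axiom $\D\phi\to\K\D\phi$ together with item~4(a) and the Truth axiom to transfer $\D\phi$ from $X_{wa}$ into $X_{ua}$ for every $u\in P_a$ with $w\sim_a u$, after which item~5 of Definition~\ref{complete frame} supplies an agent $c\in P_u$ with $\phi\in X_{uc}$. The reverse direction is the one place where $\lambda$-assurance is essential: item~4 of Definition~\ref{complete frame} provides a world $v$ with $w\sim_a v$ and $\lambda_v=\neg\phi$, and the $\lambda_v$-assurance of every $X_{vb}$ rules out $\A^0\neg\neg\phi$, i.e.\ rules out $\neg\neg\phi$ and hence $\phi$ itself by classical propositional reasoning; by the induction hypothesis, no $b\in P_v$ satisfies $v,b\Vdash\phi$.

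The hardest step is this reverse $\D$-direction, because refuting $\D\phi$ semantically in the canonical model requires an entire world containing no agent with property $\phi$, which is precisely what the engineered $\lambda$-labelling and the $\lambda$-assurance constraint on each $X_{vb}$ were designed to deliver, and one must extract the $n=0$ instance cleanly from the infinite family of non-derivabilities in Definition~\ref{assured}. The reverse $\R$-direction is subtle in a different way: one must reconcile the purely semantic notion of awareness (presence of $b$ in every $\sim_a$-indistinguishable world) with the syntactic awareness relation $\rightsquigarrow_w$ carried by the frame, and items~3 of Definition~\ref{complete frame} and 5(b) of Definition~\ref{frame} were introduced precisely for this bridging purpose.
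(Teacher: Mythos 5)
Your proposal is correct and follows essentially the same route as the paper's proof: the same induction on structural complexity, the same pairing of each modal case with the corresponding clauses of Definition~\ref{complete frame} and conditions 4(a), 5(a), 5(b) of Definition~\ref{frame}, and the same use of the $n=0$ instance of $\lambda_v$-assurance to refute $\D\phi$ semantically. The only differences are presentational (contradiction versus contrapositive in the $\R$ and $\D$ cases), so there is nothing substantive to add.
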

\begin{proof}
We prove the lemma by induction on structural complexity of formula $\phi$. If $\phi$ is a propositional variable, then the statement of the lemma follows from statement~\eqref{canonical pi} and item~1 of Definition~\ref{sat}. If $\phi$ is a negation or an implication then the statement of the lemma follows from either item~2 or item~3 of Definition~\ref{sat}, the induction hypothesis, and the maximality and consistency of set $X_{wa}$ in the standard way.  

Suppose that formula $\phi$ has the form $\K\psi$.

\noindent $(\Rightarrow):$
Assume that $\K\psi\notin X_{wa}$. Thus, by item~1 of Definition~\ref{complete frame}, there is $u\in P_a$ such that $w\sim_a u$ and $\psi\notin X_{ua}$. Then, $u,a\nVdash \psi$ by the induction hypothesis.
Therefore, $w,a\nVdash\K\psi$ by item~4 of Definition~\ref{sat}.

\noindent $(\Leftarrow):$
Assume that $\K\psi\in X_{wa}$. Consider any $u<\alpha$ such that $w\sim_a u$. By item~4 of Definition~\ref{sat} it suffices to show that $u,a\Vdash\psi$. Indeed, the assumptions $\K\psi\in X_{wa}$ and $w\sim_a u$, by item 4(a) of Definition~\ref{frame}, imply that $\K\psi\in X_{ua}$. Then, $X_{ua}\vdash \psi$ by the Truth axiom and the Modus Ponens inference rule. Thus, $\psi\in X_{ua}$ because $X_{ua}$ is a maximal consistent set. Hence, $u,a\Vdash\psi$ by the induction hypothesis.

Suppose that formula $\phi$ has the form $\R\psi$.

\noindent $(\Rightarrow):$ Assume that $w,a\Vdash \R\psi$. Thus, by item~5 of Definition~\ref{sat}, there is an agent $b\in P_w$ such that two facts hold. First,
\begin{equation}\label{14-june-a}
    w,b\Vdash \psi.
\end{equation}
Second, for any world $u\in P_a$, if $w\sim_a u$, then $u\in P_b$. The latter, by the contraposition of item~3 of Definition~\ref{complete frame}, implies \begin{equation}\label{14-june-b}
a\rightsquigarrow_w b.    
\end{equation}
At the same time, by the induction hypothesis, statement~\eqref{14-june-a} implies $\psi\in X_{wb}$. Therefore, $\R\psi\in X_{wa}$ by statement~\eqref{14-june-b} and item~5(b) of Definition~\ref{frame} applied contrapositively.

\noindent $(\Leftarrow):$
Assume that $\R\psi\in X_{wa}$. Then, by item 2 of Definition~\ref{complete frame}, there is $b\in P_w$ such that $a\rightsquigarrow_w b$ and $\psi\in X_{wb}$. Thus, by the induction hypothesis,
\begin{equation}\label{14-june-c}
w,b\Vdash \psi.
\end{equation}
Furthermore, by item 5(a) of Definition~\ref{frame}, $a\rightsquigarrow_u b$ for every world $u\in P_a$ such that $w\sim_a u$.
Note that $\rightsquigarrow_u$ is a relation on set $P_u$. Thus, $b\in P_u$ for every world $u\in P_a$ such that $w\sim_a u$. In other words, 
$u\in P_b$ for every $u\in P_a$ such that $w\sim_a u$.
Therefore $w,a\Vdash \R\psi$ by equation~\eqref{14-june-c} and item 5 of Definition~\ref{sat}.

Suppose that formula $\phi$ has the form $\D\psi$.

\noindent $(\Rightarrow):$ Towards contradiction, assume $\D\psi\notin X_{wa}$. Thus, by item~4 of Definition~\ref{complete frame}, there is world $u\in P_a$ such that  
\begin{align}  
&w\sim_a u, \label{15-june-a}\\
&\lambda_u=\neg\psi.\label{15-june-b}
\end{align}

By the assumption $w,a\Vdash \D\psi$, statement~\eqref{15-june-a}, and item~6 of Definition~\ref{sat}, there exists an agent $b\in P_u$ such that $u,b\Vdash \psi$. Hence, $\psi\in X_{ub}$ by the induction hypothesis. Then,
$X_{ub}\vdash\neg\neg\psi$ by the laws of propositional reasoning. Thus,
$X_{ub}\vdash\neg\lambda_u$ by equation~\eqref{15-june-b}.
In other words, $X_{ub}\vdash \A^0\neg\lambda_u$. Therefore, set $X_{ub}$ is not $\lambda_u$-assured by Definition~\ref{assured}, which contradicts item~3 of Definition~\ref{frame}.

\noindent $(\Leftarrow):$ We need to show that $w,a\Vdash \D\psi$. Consider a world $u\in P_a$ such that 
\begin{equation}\label{15-june-c}
w\sim_a u.    
\end{equation}
By item~6 of Definition~\ref{sat}, it suffices to show that there exists an agent $b\in P_u$ such that $u,b\Vdash \psi$.

Assume that $\D\psi\in X_{wa}$. Then, by the Introspection of Awareness axiom and the Modus Ponens inference rule, $X_{wa}\vdash \K\D\psi$. Thus, $\K\D\psi \in X_{wa}$ because $X_{wa}$ is a maximal consistent set. Hence, $\K\D\psi\in X_{ua}$ by item 4 of Definition~\ref{frame} and statement~\eqref{15-june-c}. Then, by the Truth axiom and the Modus Ponens inference rule, $X_{ua}\vdash \D\psi$. Thus, $\D\psi \in X_{ua}$ since $X_{ua}$ is a maximal consistent set. Hence, by item 5 of Definition~\ref{complete frame}, there exists an agent $b\in P_u$ such that $\psi\in X_{ub}$. Therefore, by the induction hypothesis, $u,b \Vdash \psi$.
\end{proof}

\begin{theorem}[strong completeness]
If $X\nvdash\phi$, then there is a world $w$ and an agent $a$ of an epistemic model such that $w,a\Vdash\chi$ for each formula $\chi\in X$ and $w,a\nVdash\phi$.      
\end{theorem}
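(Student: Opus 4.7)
The plan is to reduce the theorem to the truth lemma and the completion lemma already proved. Given $X \nvdash \phi$, the set $X \cup \{\neg\phi\}$ is consistent, so by Lindenbaum (Lemma~\ref{Lindenbaum's lemma}) it extends to a maximal consistent set $Y$. My goal is to build a single-cell finite frame whose unique entry is $Y$, complete it via Lemma~\ref{complete frame exists lemma}, and then read off the desired world--agent pair from Lemma~\ref{truth lemma}.

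Concretely, I would define the initial frame $F_0 = (1,1,\lambda,P,X,\sim,\rightsquigarrow)$ with $\lambda_0 = \top$, $P = \{(0,0)\}$, $X_{00} = Y$, $\sim_0\; = \{(0,0)\}$ and $\rightsquigarrow_0\; = \{(0,0)\}$. Checking items 1, 2, 4, and 5(a) of Definition~\ref{frame} is immediate; the two substantive checks are that $Y$ is $\top$-assured and that item 5(b) holds. For 5(b), since $0 \rightsquigarrow_0 0$ by reflexivity, I must show that whenever $\R\psi \notin Y$ we have $\psi \notin Y$; equivalently, $\psi \in Y$ implies $\R\psi \in Y$. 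This is immediate from the first Self-Awareness axiom $\psi \to \R\psi$ together with the maximal consistency of $Y$.

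For $\top$-assuredness I need $Y \nvdash \A^n \neg\top$ for every $n \ge 0$. I would show the stronger fact that $\vdash \neg \A^n \neg\top$ for all $n \ge 0$. The case $n=0$ is the tautology $\vdash \neg\neg\top$. For the inductive step, the key lemma is: whenever $\vdash \neg\psi$, one has $\vdash \neg\A\psi$. Indeed, $\vdash \psi \to \bot$ together with the Monotonicity rules gives $\vdash \R\psi \to \R\bot$ and $\vdash \D\psi \to \D\bot$, and then the two Unawareness of Falsehood axioms $\neg\R\bot$ and $\neg\D\bot$ force $\vdash \neg\R\psi \wedge \neg\D\psi$, i.e., $\vdash \neg\A\psi$. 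Iterating this observation starting from $\vdash \neg\neg\top$ yields $\vdash \neg\A^n \neg\top$ for every $n$, and consistency of $Y$ then gives $Y \nvdash \A^n \neg\top$.

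With $F_0$ a valid frame in hand, Lemma~\ref{complete frame exists lemma} produces a complete extension $F^* = (\alpha,\beta,\lambda,P,X,\sim,\rightsquigarrow)$. By Definition~\ref{extension definition}, $(0,0) \in P$ and $X_{00}$ is still $Y$. The induced epistemic model via equation~\eqref{canonical pi} then, by Lemma~\ref{truth lemma}, satisfies $0,0 \Vdash \chi$ iff $\chi \in Y$ for every $\chi \in \Phi$. Since $X \subseteq Y$ and $\neg\phi \in Y$, we obtain $0,0 \Vdash \chi$ for all $\chi \in X$ and $0,0 \nVdash \phi$, as required. The only step I expect to require care is the $\top$-assuredness verification, because Definition~\ref{assured} quantifies over all $n$; but the monotonicity-plus-falsehood argument above handles all depths uniformly.
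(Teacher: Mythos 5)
Your proposal is correct and follows essentially the same route as the paper: extend $X\cup\{\neg\phi\}$ to a maximal consistent set by Lindenbaum, seed a $1\times 1$ frame with it (with $\lambda_0=\top$), complete the frame via Lemma~\ref{complete frame exists lemma}, and conclude by the truth lemma. The $\top$-assuredness argument you spell out is exactly the paper's Lemma~\ref{unawareness of falsehood} and Corollary~\ref{cons is top-assured} from the appendix, so nothing is missing.
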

\begin{proof}
Set $\{\neg\phi\}\cup X$ is consistent by the assumption $X\nvdash\phi$. By Lemma~\ref{Lindenbaum's lemma}, it can be extended to a maximal consistent set $X_{00}$. Consider tuple $F=(1,1,\lambda,P,X,\sim,\rightsquigarrow)$, where  
\begin{enumerate}
    \item $\lambda_0=\top$,
    \item $P=\{(0,0)\}$,
    \item $X(0,0)=X_{00}$,
    \item $\sim_0=\{(0,0)\}$,
    \item $\rightsquigarrow=\{(0,0)\}$.
\end{enumerate}
This tuple is a frame by Definition~\ref{frame}. By Lemma~\ref{complete frame exists lemma}, frame $F$ can be extended to a complete frame $F'$. Consider the canonical model corresponding to frame $F'$. Note that $0,0\Vdash\chi$ for each $\chi\in X$ and $0,0\Vdash \neg\phi$ by Lemma~\ref{truth lemma}. Therefore, $0,0\nVdash \phi$ by item~2 of Definition~\ref{sat}. 
\end{proof}

\section{Conclusion}

We have proposed to interpret ``awareness'' as knowledge of existence and observed that such knowledge can have two distinct forms: de re and de dicto. Our main technical result is a sound and complete logical system that describes the interplay between two modalities representing these two forms of awareness, as well as the standard ``knowledge of the fact'' modality usually studied in epistemic logic. 

\bibliography{naumov}

\clearpage

\appendix

\addtolength{\oddsidemargin}{2.5cm}
\addtolength{\evensidemargin}{2.5cm}
	\addtolength{\textwidth}{-5cm}



\onecolumn



\clearpage

\begin{center}
    {\LARGE\sc Technical Appendix}

\vspace{5mm}

    {\bf This appendix is not a part of the AAAI-26 proceedings.}
\end{center}

\vspace{5mm}

\begin{lemma}\label{A lemma}
$\vdash \D\A^n\phi\to\D\phi$ for each $n\ge 0$.   
\end{lemma}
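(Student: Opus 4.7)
The plan is a routine induction on $n$, using the General Awareness axiom as the inductive workhorse. The base case $n=0$ reduces to $\D\phi\to\D\phi$, which is a propositional tautology.

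For the inductive step, assume $\vdash \D\A^{n}\phi\to\D\phi$. Observe that $\A^{n+1}\phi$ is by definition $\A(\A^{n}\phi)$, i.e.\ $\R(\A^{n}\phi)\vee\D(\A^{n}\phi)$. Instantiating the General Awareness axiom with $\A^{n}\phi$ in place of $\phi$ gives
\begin{equation*}
\vdash\D\bigl(\R(\A^{n}\phi)\vee\D(\A^{n}\phi)\bigr)\to\D\A^{n}\phi,
\end{equation*}
which is exactly $\vdash\D\A^{n+1}\phi\to\D\A^{n}\phi$. Chaining this implication with the induction hypothesis via propositional reasoning and Modus Ponens yields $\vdash\D\A^{n+1}\phi\to\D\phi$, closing the induction.

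There is no real obstacle here; the only thing worth stating carefully is the unpacking of $\A^{n+1}\phi$ as $\A(\A^{n}\phi)$ so that the General Awareness axiom can be applied with $\A^{n}\phi$ as the substituted formula. The Monotonicity rule for $\D$ is not needed, since we are not pushing the implication through $\D$ on the consequent side — the chaining happens entirely at the top level.
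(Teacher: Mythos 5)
Your proof is correct and matches the paper's argument essentially verbatim: the same induction on $n$, the same base case, and the same instantiation of the General Awareness axiom with $\A^{n}\phi$ followed by propositional chaining with the induction hypothesis. Nothing further is needed.
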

\begin{proof}
We prove the lemma by induction on $n$. If $n = 0$, then $\D\A^n\phi\to\D\phi$ is a propositional tautology. Suppose that $\vdash \D\A^{n}\phi\to\D\phi$. Note that the formula $\D(\R\A^n\phi\vee \D\A^n\phi)\to \D\A^n\phi$ is an instance of the General Awareness axiom. Then $\vdash\D(\R\A^n\phi\vee \D\A^n\phi)\to \D\phi$ by propositional reasoning. Hence, $\vdash\D\A^{n+1}\phi\to \D\phi$ by the definition of notation $\A$.
\end{proof}

\begin{lemma}[Deduction]\label{deduction lemma}
If $X,\phi\vdash\psi$, then $X\vdash\phi\to\psi$.
\end{lemma}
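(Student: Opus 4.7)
The plan is to prove the Deduction Lemma by induction on the length of a derivation of $\psi$ from the set of additional axioms $X\cup\{\phi\}$. The crucial feature of the binary relation $X\vdash\psi$, as defined in the paper, is that only Modus Ponens is used; the Necessitation and Monotonicity rules are already absorbed into the set of theorems. This is precisely what makes the standard propositional deduction argument applicable, even in the presence of modalities.

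For the base case, I would distinguish three subcases for a derivation of length one. If $\psi$ is a theorem of the logical system (that is, $\vdash\psi$), then the propositional tautology $\psi\to(\phi\to\psi)$ is also a theorem, and a single application of Modus Ponens yields $\vdash\phi\to\psi$, hence $X\vdash\phi\to\psi$. The case $\psi\in X$ is identical, using the same tautology. Finally, if $\psi$ coincides with $\phi$, then $\phi\to\psi$ is the tautology $\phi\to\phi$, which is a theorem and therefore derivable from $X$.

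For the inductive step, suppose $\psi$ was obtained by Modus Ponens from two formulae $\chi$ and $\chi\to\psi$, each of which has a shorter derivation from $X\cup\{\phi\}$. By the induction hypothesis, $X\vdash\phi\to\chi$ and $X\vdash\phi\to(\chi\to\psi)$. The propositional tautology $\bigl(\phi\to(\chi\to\psi)\bigr)\to\bigl((\phi\to\chi)\to(\phi\to\psi)\bigr)$ is a theorem, so two applications of Modus Ponens from $X$ give $X\vdash\phi\to\psi$, as required.

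There is essentially no obstacle here: the argument is the textbook proof of the deduction theorem for propositional logic. The only thing worth double-checking is that the authors' definition of $X\vdash\psi$ restricts the inference to Modus Ponens only, since the deduction theorem would fail if, say, Necessitation were permitted at the $X$-derivation level (one would not in general have $\phi\vdash\K\phi$ implying $\vdash\phi\to\K\phi$). Because the paper explicitly states that $X\vdash\psi$ uses only Modus Ponens beyond the theorems, the induction goes through cleanly.
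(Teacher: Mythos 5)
Your proof is correct and follows essentially the same route as the paper's: induction on the length of the derivation, with the base cases (theorem, member of $X$, or $\phi$ itself) handled via the tautology $\psi\to(\phi\to\psi)$ and the Modus Ponens step via the tautology $(\phi\to(\chi\to\psi))\to((\phi\to\chi)\to(\phi\to\psi))$. Your remark that the argument works only because $X\vdash\psi$ is restricted to Modus Ponens is exactly the right thing to check and matches the paper's setup.
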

\begin{proof}
Suppose that sequence $\psi_1,\dots,\psi_n$ is a proof from set $X\cup\{\phi\}$ and the theorems of our logical system that uses the Modus Ponens inference rule only. In other words, for each $k\le n$, either
\begin{enumerate}
    \item $\vdash\psi_k$, or
    \item $\psi_k\in X$, or
    \item $\psi_k$ is equal to $\phi$, or
    \item there are $i,j<k$ such that formula $\psi_j$ is equal to $\psi_i\to\psi_k$.
\end{enumerate}
It suffices to show that $X\vdash\phi\to\psi_k$ for each $k\le n$. We prove this by induction on $k$ through considering the four cases above separately.

\vspace{1mm}
\noindent{\em Case I}: $\vdash\psi_k$. Note that $\psi_k\to(\phi\to\psi_k)$ is a propositional tautology, and thus, is an axiom of our logical system. Hence, $\vdash\phi\to\psi_k$ by the Modus Ponens inference rule. Therefore, $X\vdash\phi\to\psi_k$. 

\vspace{1mm}
\noindent{\em Case II}: $\psi_k\in X$. Then, $X\vdash\psi_k$, similarly to the previous case.

\vspace{1mm}
\noindent{\em Case III}: formula $\psi_k$ is equal to $\phi$. Thus, $\phi\to\psi_k$ is a propositional tautology. Then, $X\vdash\phi\to\psi_k$. 

\vspace{1mm}
\noindent{\em Case IV}:  formula $\psi_j$ is equal to $\psi_i\to\psi_k$ for some $i,j<k$. Thus, by the induction hypothesis, $X\vdash\phi\to\psi_i$ and $X\vdash\phi\to(\psi_i\to\psi_k)$. Note that formula 
$$
(\phi\to\psi_i)\to((\phi\to(\psi_i\to\psi_k))\to(\phi\to\psi_k))
$$
is a propositional tautology. Therefore, $X\vdash \phi\to\psi_k$ by applying the Modus Ponens inference rule twice.
\end{proof}

\begin{lemma}[Positive Introspection]\label{positive introspection lemma}
$\vdash \K\phi\to\K\K\phi$. 
\end{lemma}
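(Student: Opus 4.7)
The plan is to derive $\vdash \K\phi \to \K\K\phi$ by a short chain through the intermediate formula $\K\neg\K\neg\K\phi$, using Negative Introspection in two different instantiations together with the Truth and Distributivity axioms. The key observation is that Negative Introspection can be applied not only to $\phi$ itself but also to $\neg\K\phi$, and this second instantiation is precisely what supplies the outer $\K$ that the contrapositive of Truth alone cannot produce. Without it, any naive attempt to derive the claim becomes circular.

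First, I would establish $\vdash \K\phi \to \K\neg\K\neg\K\phi$. Substituting $\neg\K\phi$ for $\phi$ in the Truth axiom yields $\K\neg\K\phi \to \neg\K\phi$, whose propositional contrapositive is $\K\phi \to \neg\K\neg\K\phi$. Next, substituting $\neg\K\phi$ for $\phi$ in Negative Introspection yields $\neg\K\neg\K\phi \to \K\neg\K\neg\K\phi$. Chaining these two implications by propositional reasoning (and the Deduction Lemma if convenient) gives $\K\phi \to \K\neg\K\neg\K\phi$.

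Second, I would establish $\vdash \K\neg\K\neg\K\phi \to \K\K\phi$. Starting from Negative Introspection itself, $\neg\K\phi \to \K\neg\K\phi$, its contrapositive is $\neg\K\neg\K\phi \to \K\phi$. Applying Necessitation to this theorem gives $\K(\neg\K\neg\K\phi \to \K\phi)$, and then the Distributivity axiom together with Modus Ponens yields $\K\neg\K\neg\K\phi \to \K\K\phi$.

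Chaining the two implications established above produces $\vdash \K\phi \to \K\K\phi$, as required. The only slightly non-routine step is spotting the second instantiation of Negative Introspection, with $\neg\K\phi$ in place of $\phi$: it is exactly what promotes $\neg\K\neg\K\phi$ under a $\K$, bridging the gap that the contrapositive of Truth leaves open. Once that substitution is in place, everything else is standard propositional reasoning combined with Necessitation and Distributivity, and no new technical machinery from the surrounding paper (awareness modalities, $\lambda$-assured sets, frames, etc.) is needed.
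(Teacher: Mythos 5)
Your proof is correct and follows essentially the same route as the paper's: both pass through the intermediate formula $\K\neg\K\neg\K\phi$, using the contrapositive of the Truth axiom plus Negative Introspection instantiated at $\neg\K\phi$ for the first half, and the contrapositive of Negative Introspection lifted by Necessitation and Distributivity for the second half. No differences worth noting.
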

\begin{proof}
Formula $\K\neg\K\phi\to\neg\K\phi$ is an instance of the Truth axiom. Thus, $\vdash \K\phi\to\neg\K\neg\K\phi$ by contraposition. Hence, taking into account the following instance of  the Negative Introspection axiom: $\neg\K\neg\K\phi\to\K\neg\K\neg\K\phi$,
we have 
\begin{equation}\label{pos intro eq 2}
\vdash \K\phi\to\K\neg\K\neg\K\phi.
\end{equation}

At the same time, $\neg\K\phi\to\K\neg\K\phi$ is an instance of the Negative Introspection axiom. Thus, $\vdash \neg\K\neg\K\phi\to \K\phi$ by the law of contrapositive in the propositional logic. Hence, by the Necessitation inference rule, 
$\vdash \K(\neg\K\neg\K\phi\to \K\phi)$. Thus, by  the Distributivity axiom and the Modus Ponens inference rule, 
$
  \vdash \K\neg\K\neg\K\phi\to \K\K\phi.
$
The latter, together with statement~(\ref{pos intro eq 2}), implies the statement of the lemma by propositional reasoning.
\end{proof}

\begin{lemma}\label{super distributivity}
If $\phi_1,\dots,\phi_n\vdash\psi$, then $\K\phi_1,\dots,\K\phi_n\vdash\K\psi$.
\end{lemma}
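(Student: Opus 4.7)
The plan is a standard Deduction-Necessitation-Distributivity argument carried out by induction on $n$. First I would apply the Deduction Lemma (Lemma~\ref{deduction lemma}) repeatedly $n$ times to the hypothesis $\phi_1,\dots,\phi_n\vdash\psi$, peeling off the premises one at a time, to obtain
$$
\vdash \phi_1\to(\phi_2\to\dots\to(\phi_n\to\psi)\dots).
$$
Next, I would apply the Necessitation inference rule once to obtain
$$
\vdash \K\bigl(\phi_1\to(\phi_2\to\dots\to(\phi_n\to\psi)\dots)\bigr).
$$

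Then I would strip the outer $\K$ through a chain of $n$ applications of the Distributivity axiom together with Modus Ponens. Concretely, the instance of Distributivity
$$
\K(\phi_1\to\chi)\to(\K\phi_1\to\K\chi),
$$
where $\chi$ is the nested implication $\phi_2\to\dots\to(\phi_n\to\psi)\dots$, yields $\vdash \K\phi_1\to\K\chi$ by Modus Ponens. Iterating this step (always peeling off the leftmost $\phi_i$ from the body of $\K$) gives
$$
\vdash \K\phi_1\to\bigl(\K\phi_2\to\dots\to(\K\phi_n\to\K\psi)\dots\bigr).
$$
Finally, taking $\K\phi_1,\dots,\K\phi_n$ as additional hypotheses and applying Modus Ponens $n$ times produces $\K\phi_1,\dots,\K\phi_n\vdash\K\psi$, which is exactly what is required.

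There is no real obstacle here: every ingredient is already in place. The Deduction Lemma is Lemma~\ref{deduction lemma}, the Necessitation rule and the Distributivity axiom are given in the axiomatisation, and the binary relation $X\vdash\phi$ is defined to allow iterated Modus Ponens from theorems plus $X$. The only mildly delicate point is bookkeeping in the induction so that each step of Distributivity is applied to a formula of the correct nested shape; this is straightforward and can be handled either by induction on $n$ or by treating the nested implication as a single $\chi$ and repeating.
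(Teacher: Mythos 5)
Your proof is correct and uses exactly the same ingredients as the paper's (the Deduction Lemma, Necessitation, Distributivity, and Modus Ponens); the only difference is that the paper packages the iteration as an induction on $n$ while you unroll it explicitly, which is an immaterial reorganisation of the same argument.
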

\begin{proof}
We prove the statement of the lemma by induction on $n$. If $n=0$, then the statement of the lemma follows from the Necessitation inference rule.

Suppose $n>0$. By Lemma~\ref{deduction lemma}, the assumption
$$\phi_1,\dots,\phi_{n-1}, \phi_{n}\vdash\psi.$$
implies
$$\phi_1,\dots,\phi_{n-1}\vdash \phi_{n}\to\psi.$$
Then, by the induction hypothesis, 
$$\K\phi_1,\dots,\K\phi_{n-1}\vdash \K(\phi_{n}\to\psi).$$
Hence, by the Distributivity axiom and the Modus Ponens inference rule,
$$\K\phi_1,\dots,\K\phi_n\vdash \K\phi_{n+1}\to\K\psi.$$
Therefore, 
$$\K\phi_1,\dots,\K\phi_n, \K\phi_{n+1}\vdash\K\psi.$$
by Modus Ponens inference rule.
\end{proof}

\begin{lemma}\label{unawareness of falsehood}
$\vdash \neg\A^n\neg\top$ for each $n\ge 0$.    
\end{lemma}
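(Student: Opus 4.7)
The plan is to proceed by induction on $n$, exploiting the Unawareness of Falsehood axioms together with the two Monotonicity inference rules for $\R$ and $\D$.

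For the base case $n=0$, the formula $\neg\A^0\neg\top$ is just $\neg\neg\top$, a propositional tautology.

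For the inductive step, assume $\vdash \neg\A^n\neg\top$, which by propositional reasoning gives $\vdash \A^n\neg\top\to\bot$. I would then apply the Monotonicity rule for $\R$ to obtain $\vdash \R\A^n\neg\top\to\R\bot$, and the Monotonicity rule for $\D$ to obtain $\vdash \D\A^n\neg\top\to\D\bot$. Combined with the two Unawareness of Falsehood axioms $\neg\R\bot$ and $\neg\D\bot$, propositional reasoning yields $\vdash \neg\R\A^n\neg\top$ and $\vdash \neg\D\A^n\neg\top$. Hence $\vdash \neg(\R\A^n\neg\top\vee\D\A^n\neg\top)$, which by definition of the abbreviation $\A$ is exactly $\vdash \neg\A^{n+1}\neg\top$, completing the induction.

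There is no real obstacle here: the lemma is essentially a direct consequence of the fact that unawareness of falsehood propagates upward through the $\A$-tower via monotonicity. The only thing worth being careful about is making sure the Monotonicity rules are being applied to the theorem $\A^n\neg\top\to\bot$ (rather than attempting to push $\bot$ through $\A$ directly), and that the abbreviation $\A\phi\equiv\R\phi\vee\D\phi$ is unfolded at the final step.
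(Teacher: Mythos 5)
Your proof is correct and follows essentially the same route as the paper's: induction on $n$, with the inductive step pushing $\vdash\A^n\neg\top\to\bot$ through both Monotonicity rules, discharging $\R\bot$ and $\D\bot$ via the Unawareness of Falsehood axioms, and unfolding the abbreviation $\A$ at the end. No gaps.
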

\begin{proof}
We prove the lemma by induction on $n$. If $n = 0$, then $\neg\A^n\neg\top$ is a propositional tautology. Assume that $\vdash \neg\A^{n}\neg\top$. Then, $\vdash \A^{n}\neg\top\to \bot$ by propositional reasoning. Thus, by the Monotonicity inference rule, $\vdash \R\A^{n}\neg\top \to \R\bot$. Hence, $\vdash \R\A^{n}\neg\top \to \bot$ by the Unawareness of Falsehood axiom and propositional reasoning. Thus, by propositional reasoning,
\begin{equation}\label{2-august-a}
\vdash \neg\R\A^{n}\neg\top.
\end{equation}
At the same time, by the Monotonicity inference rule, $\vdash \D\A^{n}\neg\top \to \D\bot$. Then, by the Unawareness of Falsehood axiom and propositional reasoning $\vdash \D\A^{n}\neg\top \to \bot$. Hence,
$
\vdash \neg\D\A^{n}\neg\top.
$
Thus, by equations~\eqref{2-august-a} and propositional reasoning,  $\vdash \neg(\R\A^{n}\neg\top\vee \D\A^{n}\neg\top)$. Hence, $\vdash \neg\A\A^{n}\neg\top$ and consequently $\vdash \neg\A^{n+1}\neg\top$ by the definition of notation $\A$.
\end{proof}

\begin{corollary}\label{cons is top-assured}
Any consistent set of formulae is $\top$-assured.    
\end{corollary}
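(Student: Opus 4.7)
The plan is to derive the corollary immediately from the preceding Lemma~\ref{unawareness of falsehood} together with the definition of consistency. Unpacking Definition~\ref{assured}, a set $X$ is $\top$-assured precisely when $X \nvdash \A^n\neg\top$ for every $n \geq 0$, so the task reduces to ruling out any such derivation for a consistent~$X$.

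First I would argue by contradiction: suppose $X$ is consistent yet $X \vdash \A^n\neg\top$ for some $n\geq 0$. Lemma~\ref{unawareness of falsehood} gives $\vdash\neg\A^n\neg\top$, and therefore certainly $X \vdash \neg\A^n\neg\top$. Applying the Modus Ponens rule to the formula $\A^n\neg\top \to \bot$ (obtained from $\neg\A^n\neg\top$ by propositional reasoning) and the assumed derivation of $\A^n\neg\top$ from $X$ yields $X \vdash \bot$, contradicting the assumed consistency of $X$. Hence no such $n$ exists, and $X$ is $\top$-assured by Definition~\ref{assured}.

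There is no real obstacle here; the corollary is essentially a one-line consequence of Lemma~\ref{unawareness of falsehood}, whose role is exactly to supply the family of theorems $\neg\A^n\neg\top$ needed to ensure that the ``default'' row label $\lambda = \top$ never disqualifies a maximal consistent set from appearing in a frame. The only thing to be careful about is to invoke Lemma~\ref{unawareness of falsehood} uniformly in $n$, so that the quantification ``for each $n \geq 0$'' in Definition~\ref{assured} is actually met; this is immediate since the lemma is stated for all $n$.
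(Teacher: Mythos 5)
Your proposal is correct and matches the paper's intent exactly: the corollary is stated without proof precisely because it follows from Lemma~\ref{unawareness of falsehood} by the one-line contradiction argument you give (a derivation of $\A^n\neg\top$ from a consistent $X$ would combine with the theorem $\neg\A^n\neg\top$ to yield $X\vdash\bot$). Nothing is missing.
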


\begin{lemma}\label{Monotonicity-of-A}
$\vdash \A^m\phi\to\A^n\phi$, for any $n\ge m\ge 0$.
\end{lemma}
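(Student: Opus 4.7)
The plan is to reduce the general claim to a single one-step monotonicity fact, namely $\vdash \psi \to \A\psi$ for any formula $\psi$, and then iterate. Once this is established, the statement $\vdash \A^m\phi \to \A^n\phi$ follows by an easy induction on $k = n - m$.

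For the one-step fact, I would use the first Self-Awareness axiom $\psi \to \R\psi$. Since $\A\psi$ is, by definition, the formula $\R\psi \vee \D\psi$, propositional reasoning gives $\vdash \R\psi \to (\R\psi \vee \D\psi)$, and composing with the Self-Awareness axiom yields $\vdash \psi \to \A\psi$. In particular, for each $k \ge 0$, taking $\psi = \A^k\phi$ we obtain $\vdash \A^k\phi \to \A^{k+1}\phi$.

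With this in hand, the proof of the lemma proceeds by induction on $k = n - m \ge 0$. For the base case $k = 0$ we have $n = m$, and the formula $\A^m\phi \to \A^m\phi$ is a propositional tautology. For the inductive step, assume $\vdash \A^m\phi \to \A^{m+k}\phi$. Combining this with the one-step fact $\vdash \A^{m+k}\phi \to \A^{m+k+1}\phi$ by propositional reasoning yields $\vdash \A^m\phi \to \A^{m+k+1}\phi$, completing the induction.

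There is no real obstacle here: the lemma is essentially a syntactic monotonicity fact that rides entirely on the first Self-Awareness axiom together with the definition $\A\phi \equiv \R\phi \vee \D\phi$. The only thing to be careful about is not to confuse the role of $\R$ and $\D$: one might be tempted to try to use the second Self-Awareness axiom $\K\phi \to \D\phi$, but that would require a $\K$ in the antecedent we do not have, whereas the first Self-Awareness axiom $\phi \to \R\phi$ applies to an arbitrary $\psi$ and suffices to push any formula into $\A$.
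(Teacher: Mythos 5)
Your proof is correct and takes essentially the same approach as the paper: both arguments rest on the first Self-Awareness axiom $\psi\to\R\psi$, the definition $\A\psi = \R\psi\vee\D\psi$, and a straightforward induction (the paper proves $\vdash\psi\to\A^k\psi$ by induction on $k$, which is just a repackaging of your one-step fact plus chaining).
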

\begin{proof}
It suffices to prove that $\vdash\psi\to\A^k\psi$ for each $k\ge 0$ and each formula $\psi$. We show this by induction on $k$. If $k=0$, then the statement is true because $\psi\to\psi$ is a propositional tautology. 
Suppose that $\vdash\psi\to\A^k\psi$. Note that the formula $\A^k\psi\to\R\A^k\psi$ is an instance of the Self-Awareness axiom. Then, $\vdash\psi\to(\R\A^k\psi\vee\D\A^k\psi)$ by propositional reasoning. Therefore, $\vdash\psi\to\A^{k+1}\psi$ by the definition of  $\A$.
\end{proof}

\begin{lemma}[bridge builder]\label{bridge builder}
For any formula $\lambda\in\Phi$ and any $\lambda$-{assured} maximal consistent set of formulae $X$ and any formula $\R\phi\in X$, there is a $\lambda$-assured maximal consistent set $Y$ such that
$
\{\phi\}\cup  \{\neg \psi\mid \R\psi\notin X\}\subseteq Y
$.
\end{lemma}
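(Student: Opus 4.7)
The plan is to build $Y$ in two stages. First, I would consider the set
$$
Z \;=\; \{\phi\}\cup\{\neg\psi\mid \R\psi\notin X\}
$$
and show that $Z$ itself is $\lambda$-assured. Second, I would enlarge $Z$ to $Z' = Z\cup\{\neg\A^n\neg\lambda\mid n\ge 0\}$, verify that $Z'$ remains consistent, and invoke Lemma~\ref{Lindenbaum's lemma} to obtain a maximal consistent extension $Y$. Because $Y$ explicitly contains $\neg\A^n\neg\lambda$ for every $n$, its $\lambda$-assuredness is then immediate: each $\A^n\neg\lambda$ is excluded from $Y$ by consistency, so by maximality $Y\nvdash \A^n\neg\lambda$.

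The main obstacle is the first stage: proving $Z\nvdash \A^n\neg\lambda$ for every $n\ge 0$. I would argue by contradiction. If $Z\vdash \A^n\neg\lambda$, then because derivations are finite there exist finitely many $\psi_1,\dots,\psi_k$ with each $\R\psi_i\notin X$ such that $\phi,\neg\psi_1,\dots,\neg\psi_k\vdash \A^n\neg\lambda$. The Deduction lemma (Lemma~\ref{deduction lemma}) combined with propositional reasoning converts this to $\vdash \phi\to(\A^n\neg\lambda\vee\psi_1\vee\dots\vee\psi_k)$. Applying the Monotonicity rule for $\R$ yields $\vdash \R\phi\to \R(\A^n\neg\lambda\vee\psi_1\vee\dots\vee\psi_k)$, and iterated application of the Disjunctivity axiom distributes $\R$ over the disjunction, giving $\vdash \R\phi\to \R\A^n\neg\lambda\vee \R\psi_1\vee\dots\vee \R\psi_k$. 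Since $\R\phi\in X$ and each $\neg\R\psi_i\in X$ by maximality of $X$, we obtain $X\vdash \R\A^n\neg\lambda$. But $\R\chi\to \A\chi$ is a propositional tautology by the definition $\A\chi := \R\chi\vee \D\chi$, so $X\vdash \A^{n+1}\neg\lambda$, contradicting the assumption that $X$ is $\lambda$-assured.

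The second stage is then routine. If $Z'$ were inconsistent, a compactness-style argument (Deduction lemma plus propositional reasoning on the finitely many negated formulas used) would give $Z\vdash \A^{n_1}\neg\lambda\vee\dots\vee \A^{n_m}\neg\lambda$ for some $n_1,\dots,n_m$; letting $N=\max_i n_i$ and using Lemma~\ref{Monotonicity-of-A} to absorb each $\A^{n_i}\neg\lambda$ into $\A^N\neg\lambda$ yields $Z\vdash \A^N\neg\lambda$, contradicting the first stage. Hence $Z'$ is consistent and extends by Lemma~\ref{Lindenbaum's lemma} to a maximal consistent $Y$. By construction, $\phi\in Y$ and $\neg\psi\in Y$ whenever $\R\psi\notin X$; and $Y$ is $\lambda$-assured as noted above.
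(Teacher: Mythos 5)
Your proof is correct, and its core engine --- peel off a finite list $\neg\psi_1,\dots,\neg\psi_k$, apply the Deduction lemma, the Monotonicity rule for $\R$, and iterated Disjunctivity, then use $\R\phi\in X$ and $\neg\R\psi_i\in X$ to land a contradiction with the $\lambda$-assuredness of $X$ --- is exactly the paper's. Where you diverge is in how $\lambda$-assuredness is transferred to $Y$. You secure it by force: you enlarge the base set with all formulae $\neg\A^n\neg\lambda$, which obliges you to prove the stronger statement that the base set is itself $\lambda$-assured (carrying the extra disjunct $\A^n\neg\lambda$ through the Disjunctivity argument) and then to run a second compactness argument with Lemma~\ref{Monotonicity-of-A} to absorb finitely many $\A^{n_i}\neg\lambda$ into a single $\A^N\neg\lambda$. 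The paper instead proves only plain consistency of $\{\phi\}\cup\{\neg\psi\mid\R\psi\notin X\}$, takes an arbitrary Lindenbaum extension $Y$, and then observes that each $\neg\A^n\neg\lambda$ is \emph{already a member} of the base set: if $\R\A^n\neg\lambda$ were in $X$, then $X\vdash\R\A^n\neg\lambda\vee\D\A^n\neg\lambda$, i.e.\ $X\vdash\A^{n+1}\neg\lambda$, contradicting the assuredness of $X$; hence $\R\A^n\neg\lambda\notin X$ and $\neg\A^n\neg\lambda\in\{\neg\psi\mid\R\psi\notin X\}\subseteq Y$. So your set $Z'$ in fact equals $Z$, and your second stage is doing work the definition of $Z$ already does for free. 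Both arguments are sound; the paper's is shorter because it exploits membership rather than derivability, while yours is more self-contained in that it never needs to identify the witnesses $\neg\A^n\neg\lambda$ as elements of the original set.
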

\begin{proof}

\begin{claim}
Set     
$
\{\phi\}\cup  \{\neg \psi\mid \R\psi\notin X\}
$
is consistent.
\end{claim}
\begin{proof-of-claim}
Suppose the opposite. Thus, there are formulae
\begin{equation}\label{2-aug-a}
\R\psi_1,\dots,\R\psi_n\notin X    
\end{equation}
such that
$\neg\psi_1,\dots,\neg\psi_n\vdash \neg\phi$.
Hence, 
$\phi\vdash \psi_1\vee\dots\vee\psi_n$
by the laws of propositional reasoning. Thus,
$\vdash\phi\to \psi_1\vee\dots\vee\psi_n$
by Lemma~\ref{deduction lemma}.
Then,
$\vdash\R\phi\to \R(\psi_1\vee\dots\vee\psi_n)$
by the Monotonicity inference rule.
Thus, 
$\vdash\R\phi\to \R\psi_1\vee\dots\vee\R\psi_n$
by the Disjunctivity axiom and propositional reasoning.
Hence,
$X\vdash\R\psi_1\vee\dots\vee\R\psi_n$
by the assumption $\R\phi\in X$ of the lemma. Thus, 
$\R\psi_i\in X$ for some $i\le n$
because $X$ is a maximal consistent set, which contradicts statement~\eqref{2-aug-a}.
\end{proof-of-claim}
By Lemma~\ref{Lindenbaum's lemma}, the set 
$
\{\phi\}\cup  \{\neg \psi\mid \R\psi\notin X\}
$
can be extended to a maximal consistent set~$Y$. By Definition~\ref{assured}, to finish the proof of the lemma, it suffices to establish the following claim.
\begin{claim}
 $\neg\A^{n}\neg \lambda\in Y$ for each $n\ge 0$.   
\end{claim}
\begin{proof-of-claim}
Suppose that
$\neg\A^{n}\neg \lambda\notin Y$
for some $n\ge 0$.
Then, by the choice of set $Y$,
$$\neg\A^{n}\neg \lambda\notin\{\neg \psi\mid \R\psi\notin X\}.$$
Thus, $\R\A^{n}\neg \lambda \in X$ by considering $\psi=\A^{n}\neg \lambda$ in the above formula.
Hence, by the laws of propositional reasoning,
$X\vdash (\R\A^{n}\neg \lambda)\vee (\D\A^{n}\neg \lambda)$.
Thus, 
$X\vdash \A\A^{n}\neg \lambda$ by the definition of notation $\A$.
Then, 
$X\vdash \A^{n+1}\neg \lambda$.
Therefore, set $X$ is not $\lambda$-assured by Definition~\ref{assured}, which contradicts an assumption of the lemma.
\end{proof-of-claim}
This concludes the proof of the lemma.
\end{proof}

\noindent{\bf Lemma~\ref{type 5 lemma}}
{\em
For any finite frame $(\alpha,\beta,\lambda,P,X,\sim,\rightsquigarrow)$, any $(u,b)\in P$, and any formula $\D\phi\in X_{ub}$, there is an extension $(\alpha,\beta+1,\lambda',P',X',\sim',\rightsquigarrow')$ such that $\phi\in X'_{u\beta}$.  }  
\begin{proof}
Let $\lambda'_w=\lambda_w$ for each $w<\alpha$. Furthermore, let $P'$ be the relation $P\cup\{(u,\beta)\}$.
Consider the set of formulae
\begin{equation}\label{25-may-a}
Y^-=\{\phi\}\cup \{\neg\A^n\neg\lambda_u\mid n\ge 0\}.
\end{equation}
\begin{claim}
Set $Y^-$ is consistent.    
\end{claim}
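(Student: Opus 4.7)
The plan is to argue by contradiction: suppose $Y^-$ is inconsistent and derive that $X_{ub}$ cannot be $\lambda_u$-assured, contradicting item~3 of Definition~\ref{frame}.

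If $Y^-\vdash\bot$, then by compactness of the consequence relation, there must be finitely many formulae $\neg\A^{n_1}\neg\lambda_u,\dots,\neg\A^{n_k}\neg\lambda_u$ from $Y^-$ such that, together with $\phi$, they derive $\bot$. My first move is to collapse these into a single formula. By Lemma~\ref{Monotonicity-of-A}, we have $\vdash \A^m\neg\lambda_u\to \A^N\neg\lambda_u$ whenever $m\le N$, so contrapositively $\neg\A^N\neg\lambda_u\vdash \neg\A^m\neg\lambda_u$. Taking $N=\max(n_1,\dots,n_k)$, a single $\neg\A^N\neg\lambda_u$ entails all the others, and hence
$$\phi,\ \neg\A^N\neg\lambda_u\vdash \bot.$$
Applying the Deduction Lemma (Lemma~\ref{deduction lemma}) twice yields $\vdash\phi\to\A^N\neg\lambda_u$.

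Next, I would push this inside $\D$. The Monotonicity inference rule for $\D$ gives $\vdash\D\phi\to\D\A^N\neg\lambda_u$, and Lemma~\ref{A lemma} (applied with formula $\neg\lambda_u$ in place of $\phi$) yields $\vdash\D\A^N\neg\lambda_u\to\D\neg\lambda_u$. Chaining these, $\vdash\D\phi\to\D\neg\lambda_u$. Since $\D\phi\in X_{ub}$ by hypothesis and $X_{ub}$ is maximal consistent, we conclude $\D\neg\lambda_u\in X_{ub}$, and therefore $X_{ub}\vdash\R\neg\lambda_u\vee\D\neg\lambda_u$ by propositional reasoning, i.e., $X_{ub}\vdash\A\neg\lambda_u=\A^1\neg\lambda_u$. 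By Definition~\ref{assured}, this shows $X_{ub}$ is not $\lambda_u$-assured, contradicting item~3 of Definition~\ref{frame}.

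The only subtle step is step~3 (packaging the finitely many assumptions into one): one must be careful about the direction of the monotonicity of $\A^n$ in $n$, since it is the contrapositive of Lemma~\ref{Monotonicity-of-A} that is actually needed. Everything else is a routine chase through the Deduction Lemma, Monotonicity for $\D$, Lemma~\ref{A lemma}, and the definition of $\lambda$-assurance. I do not expect any case analysis on $\phi$ or on whether $(u,b)\in P$; the argument proceeds purely at the syntactic level on $X_{ub}$.
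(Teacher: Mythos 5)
Your proof is correct and follows essentially the same route as the paper's: argue by contradiction, collapse the finitely many $\neg\A^{n_i}\neg\lambda_u$ into a single $\neg\A^{N}\neg\lambda_u$ via Lemma~\ref{Monotonicity-of-A}, apply the Deduction Lemma to get $\vdash\phi\to\A^{N}\neg\lambda_u$, push through Monotonicity for $\D$, and contradict the $\lambda_u$-assuredness of $X_{ub}$. The only (harmless) divergence is in the endgame: the paper obtains $X_{ub}\vdash\A^{N+1}\neg\lambda_u$ directly from $\D\A^{N}\neg\lambda_u$ by the purely propositional step $\D\psi\to\A\psi$, whereas you detour through Lemma~\ref{A lemma} (and hence the General Awareness axiom) to collapse down to $\A^{1}\neg\lambda_u$ --- either conclusion violates Definition~\ref{assured} equally well.
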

\begin{proof-of-claim}
Suppose the opposite. Then, there are numbers $n_1,\dots,n_k$ such that
$$
\phi \vdash \A^{n_1}\neg\lambda_u\vee
\A^{n_2}\neg\lambda_u\vee\dots\vee
\A^{n_k}\neg\lambda_u.
$$
Consider any $n\ge 0$ such that $n\ge n_i$ for each $i\le k$. Then, 
$
\vdash \phi \to \A^n\neg\lambda_u
$
by Lemma~\ref{Monotonicity-of-A} and propositional reasoning. Thus, 
$
\vdash \D\phi \to \D\A^n\neg\lambda_u
$
by the Monotonicity rule.
Hence,
$
\vdash \D\phi \to \A\A^n\neg\lambda_u
$
by the definition of notation $\A$ and propositional reasoning. In other words,
$
\vdash \D\phi \to \A^{n+1}\neg\lambda_u
$.
Thus,
$
X_{ub}\vdash \A^{n+1}\neg\lambda_u
$
by the assumption $\D\phi\in X_{ub}$ of the lemma and the Modus Ponens inference rule.
Hence, set $X_{ub}$ is not $\lambda_u$-assured, which contradicts item~3 of Definition~\ref{frame}.
\end{proof-of-claim}
By Lemma~\ref{Lindenbaum's lemma}, set $Y^-$ can be extended to a maximal consistent set $Y$. Consider a partial function $X'$ defined by the following matrix 
$$
X'=
\begin{bmatrix}
X_{00} & \dots & X_{0,b-1} & X_{0b} & X_{0,b+1} &\dots &X_{0\beta-1} & X'_{0 \beta}\\
\vdots &&\vdots&\vdots&\vdots&&\vdots&\vdots\\
X_{u0} & \dots & X_{u,b-1} & X_{ub} & X_{u,b+1} &\dots &X_{u,\beta-1} & X'_{u \beta}\\
\vdots &&\vdots&\vdots&\vdots&&\vdots&\vdots\\
X_{\alpha-1,0} & \dots & X_{\alpha-1,b-1} & X_{\alpha-1,b} & X_{\alpha-1,b+1} &\dots &X_{\alpha-1,\beta-1}& X'_{\alpha-1, \beta}
\end{bmatrix}
$$
where
\begin{equation}\label{25-may-b}
X'_{w\beta}=
\begin{cases}
Y, & \text{if $w=u$},\\
\text{undefined}, &\text{otherwise}.
\end{cases}
\end{equation}

For $a\neq\beta$, let relation $\sim'_a$ on the set $P'_a$ be the relation $\sim_a$. Additionally, let $\sim_\beta$ on the set $\alpha$ be the relation $\{(u,u)\}$.

Finally, let relation $\rightsquigarrow'_u$ on set $P'_u$ be the relation $\rightsquigarrow_u\cup \{(\beta,\beta)\}$. For $w\neq u$, let $\rightsquigarrow'_w$ be the relation $\rightsquigarrow_w$. 

\begin{claim}
Tuple $(\alpha,\beta+1,\lambda',P',X',\sim',\rightsquigarrow')$ is a frame.    
\end{claim}
\begin{proof-of-claim} We prove conditions 3, 4(a), 5(a), and 5(b) of Definition~\ref{frame} separetely.

\noindent
{\em Condition 3}: Recall that $\lambda'_u=\lambda_u$. Thus, it suffices to show that set $X'_{u\beta}$ is $\lambda_u$-assured. By Definition~\ref{assured} and equation~\eqref{25-may-b}, it suffices to show that $Y\nvdash \A^n\neg\lambda_u$ for each $n\ge 0$. This follows from equation~\eqref{25-may-a} and the consistency of set $Y$.

\noindent
{\em Condition 4(a)}: By definition of $\sim'$, it suffices to show that $\K\psi\in  X'_{u\beta}$ iff $\K\psi\in  X'_{u\beta}$ for each formula $\psi\in\Phi$. The last statement is trivially true.

\noindent
{\em Condition 5(a)}: By definition of $\sim'$ and $\rightsquigarrow'$ , it suffices to show that $\beta\rightsquigarrow'_u \beta$. The latter is true again by definition of the relation $\rightsquigarrow'$.

\noindent
{\em Condition 5(b)}: It suffices to show that if $\R\psi\notin X'_{u\beta}$, then $\psi\notin X'_{u\beta}$ for each formula $\psi\in\Phi$. Recall that $X'_{u\beta}$ is a maximal consistent set by equation~\eqref{25-may-a} and the definition of $Y$. Hence, $X'_{u\beta}\vdash \neg \R\psi$. Then, by the contraposition of the Self-Awareness axiom, $X'_{u\beta}\vdash \neg\psi$. Therefore, $\neg\psi\not\in X'_{u\beta}$ because set $X'_{u\beta}$ is consistent.
\end{proof-of-claim}

To finish the proof of the lemma, note that the frame $(\alpha,\beta+1,\lambda',P',X',\sim',\rightsquigarrow')$ is an extension of the frame $(\alpha,\beta,\lambda,P,X,\sim,\rightsquigarrow)$ by Definition~\ref{extension definition}. Furthermore, $\phi\in Y^-\subseteq Y= X'_{u\beta}$ by equations~\eqref{25-may-a} and \eqref{25-may-b}.
\end{proof}

\noindent{\bf Lemma~\ref{type 1 lemma}}
{\em 
    For any finite frame $(\alpha,\beta,\lambda,P,X,\sim,\rightsquigarrow)$, any $(u,b)\in P$, and any formula $\K\phi\notin X_{ub}$,  there is an extension $(\alpha+1,\beta,\lambda',P',X',\sim',\rightsquigarrow')$ such that
    (i) $u\sim'_b \alpha$,
    (ii) $\phi\not\in X'_{\alpha b}$, and
    (iii) for each $c<\beta$, if $b\not\rightsquigarrow_u c$, then $c\notin P'_\alpha$.
}
\begin{proof} 
Let, for each $w<\alpha+1$,
\begin{equation}\label{1-may-a}
\lambda'_{w}=
\begin{cases}
\top & \text{if $w=\alpha$},\\
\lambda_{w} & \text{otherwise},\\
\end{cases}
\end{equation}
and
\begin{equation}\label{P' definition}
P'=P\cup \{(\alpha,c)\mid b\rightsquigarrow_u c\}.    
\end{equation}

Consider the set of formulae   
\begin{equation}\label{Y- def}
Y^-=\{\neg\phi\}\cup\{\psi\mid \K\psi\in X_{ub}\}.    
\end{equation}
\begin{claim}
Set $Y^-$ is consistent.     
\end{claim}
\begin{proof-of-claim}
Suppose the opposite. Then, there are formulae 
\begin{equation}\label{2-apr-24}
\K\psi_1,\dots,\K\psi_n\in X_{ub}    
\end{equation}
such that
$$\psi_1,\dots,\psi_n\vdash \phi.$$ 
Then, by Lemma~\ref{super distributivity},
$$\K\psi_1,\dots,\K\psi_n\vdash \K\phi.$$
Hence, $X_{ub}\vdash \K\phi$ by statement~\eqref{2-apr-24}.
Therefore, $\K\phi\in X_{ub}$ because $X_{ub}$ is a maximal consistent set of formulae, which contradicts the assumption $\K\phi\notin X_{ub}$ of the lemma.
\end{proof-of-claim}    
By Lemma~\ref{Lindenbaum's lemma}, set $Y^-$ can be extended to a maximal consistent set $Y$.

\begin{claim}\label{29-apr-claim}
$\K\psi\in X_{ub}$ iff $\K\psi\in Y$ for any formula $\psi\in\Phi$.   
\end{claim}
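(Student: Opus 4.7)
The plan is to prove the two directions of the biconditional separately, using the standard S5 introspection trick: positive introspection handles one direction and negative introspection handles the other. Both use the fact that $Y^-$ was defined to contain the ``unboxed'' formula $\psi$ whenever $\K\psi \in X_{ub}$, combined with the maximality and consistency of both $X_{ub}$ and $Y$.

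For the forward direction ($\Rightarrow$), I would assume $\K\psi \in X_{ub}$. By Lemma~\ref{positive introspection lemma} we have $\vdash \K\psi \to \K\K\psi$, so $X_{ub} \vdash \K\K\psi$, and therefore $\K\K\psi \in X_{ub}$ since $X_{ub}$ is a maximal consistent set. Now instantiating the second clause of the definition of $Y^-$ in equation~\eqref{Y- def} with $\K\psi$ in the role of $\psi$, we obtain $\K\psi \in Y^- \subseteq Y$.

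For the backward direction ($\Leftarrow$), I would argue by contradiction. Suppose $\K\psi \in Y$ but $\K\psi \notin X_{ub}$. Then $\neg \K\psi \in X_{ub}$ by the maximality of $X_{ub}$. The Negative Introspection axiom gives $X_{ub} \vdash \K\neg\K\psi$, so $\K\neg\K\psi \in X_{ub}$. By the definition of $Y^-$ in equation~\eqref{Y- def}, this means $\neg\K\psi \in Y^- \subseteq Y$. But then $Y$ contains both $\K\psi$ and $\neg\K\psi$, contradicting the consistency of $Y$.

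There is no real obstacle here; the argument is entirely mechanical once one notices that the definition of $Y^-$ is tailored precisely to transport $\K$-formulae between $X_{ub}$ and $Y$, and that the S5 introspection axioms allow one to ``lift'' membership of $\K\psi$ (via $\K\K\psi \in X_{ub}$) and to ``preserve'' non-membership (via $\K\neg\K\psi \in X_{ub}$). The claim will subsequently be used to verify condition~4(a) of Definition~\ref{frame} when defining $\sim'_b$ to relate $u$ and the new world $\alpha$.
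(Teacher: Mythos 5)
Your proof is correct and follows essentially the same route as the paper: positive introspection to push $\K\K\psi$ into $X_{ub}$ for the forward direction, and negative introspection to push $\K\neg\K\psi$ into $X_{ub}$ for the backward direction, with the only cosmetic difference being that the paper states the latter as a contrapositive rather than a contradiction.
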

\begin{proof-of-claim}
$(\Rightarrow):$    
By Lemma~\ref{positive introspection lemma} and the Modus Ponens inference rule, the assumption
$\K\psi\in X_{ub}$
implies that
$X_{ub}\vdash\K\K\psi$.
Then,
$\K\K\psi\in X_{ub}$
because $X_{ub}$ is a maximal consistent set. Therefore, $\K\psi\in Y^-\subseteq Y$ by equation~\eqref{Y- def}.

\vspace{1mm}\noindent
$(\Leftarrow):$    
Suppose that
$\K\psi\notin X_{ub}$. 
Then,
$\neg\K\psi\in X_{ub}$
because $X_{ub}$ is a maximal consistent set. Thus, $X_{ub}\vdash \K\neg\K\psi$ by the Negative Introspection axiom and the Modus Ponens inference rule. Hence,
$\K\neg\K\psi\in X_{ub}$ again because $X_{ub}$ is a maximal consistent set.
Then,
$\neg\K\psi\in Y^-\subseteq Y$
by equation~\eqref{Y- def}.
Therefore, 
$\K\psi\notin Y$
because set $Y$ is a consistent set.
\end{proof-of-claim}

Note that the formula $\top\to\R\top$ is an instance of the Self-Awarness axiom. Thus, $\vdash\R\top$ by propositional reasoning. Hence, $\R\top\in Y$ because $Y$ is a maximal consistent set of formulae. At the same time, set $Y$ is $\top$-assured by Corollary~\ref{cons is top-assured}. Hence, by Lemma~\ref{bridge builder}, there is a maximal consistent set of formulae $Z$ such that
\begin{equation}\label{29-apr-b}
\{\neg\psi\mid \R\psi\notin Y\}\subseteq Z.
\end{equation}
Consider partial function $X'$ defined by the following matrix:
$$
X'=
\begin{bmatrix}
X_{00} & \dots & X_{0,b-1} & X_{0 b} & X_{0,b+1} &\dots &X_{0,\beta-1}\\
\vdots &&\vdots&\vdots&\vdots&&\vdots\\
X_{u0} & \dots & X_{u,b-1} & X_{u b} & X_{u,b+1} &\dots &X_{u,\beta-1}\\
\vdots &&\vdots&\vdots&\vdots&&\vdots\\
X_{\alpha-1,0} & \dots & X_{\alpha-1,b-1} & X_{\alpha-1,b} & X_{\alpha-1,b+1} &\dots &X_{\alpha-1,\beta-1}\\
X'_{\alpha 0} &\dots &X'_{\alpha,b-1}&X'_{\alpha b}&X'_{\alpha,b+1} &\dots &X'_{\alpha,\beta-1}
\end{bmatrix},
$$
where 
\begin{equation}\label{29-apr-a}
X'_{\alpha a}=
\begin{cases}
Y & \text{if $a=b$},\\
Z & \text{if $a\neq b$ and $b\rightsquigarrow_u a$},\\
\text{undefined} &\text{otherwise}.
\end{cases}    
\end{equation}

For $a\neq b$, let relation $\sim'_a$ on the set $P'_a$ be the reflexive closure of the relation $\sim_a$. Additionally, let relation $\sim'_b$ on the set $P'_b$ be the reflexive, symmetric, and transitive closure of the relation $\sim_b\cup\{(u,\alpha)\}$. 

For $w\neq \alpha$, let relation $\rightsquigarrow'_w$ on set $P'_w$ be the relation $\rightsquigarrow_w$. Furthermore, let relation $\rightsquigarrow'_\alpha$ on the set $\beta$ be the reflexive closure of the relation $\{(b,c)\mid\;  b\rightsquigarrow_u  c,\, c<\beta\}$.

This concludes the definition of the tuple $(\alpha+1,\beta,\lambda',P',X',\sim',\rightsquigarrow')$. 

\begin{claim}
Tuple $(\alpha+1,\beta,\lambda',P',X',\sim',\rightsquigarrow')$ is a frame.    
\end{claim}
\begin{proof-of-claim}
We prove conditions 3, 4(a), 5(a), and 5(b) of Definition~\ref{frame} separetely. 

\noindent
{\em Condition 3}: By Definition~\ref{assured}, it suffices to show that $X'_{\alpha a}\nvdash \A^n\neg\lambda'_\alpha$ for each $n\ge 0$ and each $a<\beta$. Indeed, recall that $\lambda'_\alpha=\top$ by equation~\eqref{1-may-a}. Then, $X'_{\alpha a}\vdash \neg\A^n\neg\lambda'_\alpha$ by Lemma~\ref{unawareness of falsehood}. 
Therefore, $X'_{\alpha a}\nvdash \A^n\neg\lambda'_\alpha$ because set $X'_{\alpha a}$ is consistent.

\noindent
{\em Condition 4(a)}:
It suffices to show that $\K\psi\in  X'_{u b}$ iff $\K\psi\in  X'_{\alpha b}$ for each formula $\psi\in\Phi$. The last statement follows from Claim~\ref{29-apr-claim} and equation~\eqref{29-apr-a}.

\noindent\vspace{1mm}
{\em Condition 5(a)}: It suffices to show that if $b\rightsquigarrow'_w c$, then $b\rightsquigarrow'_\alpha c$ for each $c<\beta$. The last statement follows from the definition of the relation $\rightsquigarrow'_\alpha$.

\noindent\vspace{1mm}
{\em Condition 5(b)}: It suffices to show that if $b\rightsquigarrow'_\alpha c$ and $\R\psi\notin X'_{\alpha b}$, then $\psi\notin X'_{\alpha c}$ for each $c<\beta$. If $b\neq c$, then, by equation~\eqref{29-apr-a}, it suffices to show that if $c\in P_\alpha$ and $\R\psi\notin Y$, then $\psi\notin Z$. The last statement follows from equation~\eqref{29-apr-b} and the consistency of set $Z$. If $b=c$, then by equation~\eqref{29-apr-a}, it suffices to show that if $\R\psi\notin Y$, then $\psi\notin Y$. The last statement follows from the Self-Awareness axiom, applied contrapositively, because $Y$ is a maximal consistent set.  
\end{proof-of-claim}

By Definition~\ref{extension definition}, the frame $(\alpha+1,\beta,\lambda',P',X',\sim',\rightsquigarrow')$ is an extension of the frame $(\alpha,\beta,\lambda,P,X,\sim,\rightsquigarrow)$.
Let us now show that the three items of the lemma are satisfied. Note that $u\sim'_b\alpha$ by the choice of relation $\sim'$. Next, observe that $\neg\phi\in Y^-\subseteq Y=X'_{\alpha\beta}$ by equations~\eqref{Y- def} and \eqref{29-apr-a}. Finally, for each $c<\beta$, if $b\not\rightsquigarrow_u c$, then $c\notin P'_\alpha$ by equation~\eqref{P' definition}.
\end{proof}

\noindent{\bf Lemma~\ref{type 2 lemma}}
{\em 
For any finite frame $(\alpha,\beta,\lambda,P,X,\sim,\rightsquigarrow)$, any $(u,b)\in P$, and any formula $\R\phi\in X_{ub}$, there is an extension $(\alpha,\beta+1,\lambda',P',X',\sim',\rightsquigarrow')$ such that
$b\rightsquigarrow'_u\beta$ and
$\phi\in X'_{u\beta}$.      
}
\begin{proof}
Let $\lambda'_w = \lambda_w$ for each $w<\alpha$. Let $P' = P\cup \{(w,\beta)\mid u\sim_b w\}$. 

By  item 3 of Definition~\ref{frame}, set $X_{wb}$ is $\lambda_w$-assured, for each $w\in P_b$. Note that $\R\top\in X_{wb}$ for each $w\in P_b$ by Self-Awareness axiom because $X_{wb}$ is a maximal consistent set. Also $\R\phi\in X_{ub}$ by the assumption of the lemma.
Then, by Lemma~\ref{bridge builder}, for every $w\in P_b$, there exists a 
\begin{equation}\label{11-may-a}
\text{$\lambda_w$-assured maximal consistent set $X'_{w\beta}$},  
\end{equation}
 such that
\begin{align}
&\{\top\}\cup \{\neg\psi\mid \R\psi\notin X_{wb}\}\subseteq X'_{w\beta},  \;\;\;\text{when  $w\in P'_\beta$ and  $w\neq u$},\label{11-may-b}\\ 
&\{\phi\}\cup\{\neg\psi\mid \R\psi\notin X_{ub}\}\subseteq X'_{u\beta}.  \label{11-may-c}
\end{align}
Furthermore, for for each $w<\alpha$, let
\begin{equation}
    X'_{wb} \text{ be undefined  when $w\notin P'_\beta$.}
\end{equation}

Consider partial function $X'$ defined by the following matrix:

$$
X'=
\begin{bmatrix}
X_{00} & \dots & X_{0,b-1} & X_{0b} & X_{0,b+1} &\dots &X_{0,\beta-1} & X'_{0\beta}\\
\vdots &&\vdots&\vdots&\vdots&&\vdots&\vdots\\
X_{u0} & \dots & X_{u,b-1} & X_{ub} & X_{u,b+1} &\dots &X_{u,\beta-1} & X'_{u \beta}\\
\vdots &&\vdots&\vdots&\vdots&&\vdots&\vdots\\
X_{\alpha-1,0} & \dots & X_{\alpha-1,b-1} & X_{\alpha-1,b} & X_{\alpha-1,b+1} &\dots &X_{\alpha-1,\beta-1}& X'_{\alpha-1, \beta}
\end{bmatrix}
$$

For $a\neq\beta$, let relation $\sim'_a$ on the set $P'_a$ be the relation $\sim_a$. Additionally, let $\sim_\beta$ on the set $\alpha$ be the relation $\{(w,w)\mid w\in P'_\beta\}$.

Finally, let relation $\rightsquigarrow'_w$ on set $P'_w$ be the reflexive closure of the relation $\rightsquigarrow_w\cup \{(b,\beta)\}$ for each $w<\alpha$ such that $u\sim'_b w$. Otherwise, $\rightsquigarrow'_w$ is the relation $\rightsquigarrow_w$. 

\begin{claim}
Tuple $(\alpha,\beta+1,\lambda',P',X',\sim',\rightsquigarrow')$ is a frame.    
\end{claim}
\begin{proof-of-claim}
We prove conditions 3, 4(a), 5(a), and 5(b) of Definition~\ref{frame} separetely.

\noindent
{\em Condition 3}: Recall that $\lambda'_w=\lambda_w$ for each $w<\alpha$. Thus, it suffices to show that set $X'_{w\beta}$ is $\lambda_w$-assured for each $w\in P'_{\beta}$. The last statement follows from statement~\eqref{11-may-a}. 

\noindent
{\em Condition 4(a)}: By definition of $\sim'$, it suffices to show that $\K\psi\in  X'_{w\beta}$ iff $\K\psi\in  X'_{w\beta}$ for each formula $\psi\in\Phi$. The last statement is trivially true.

\noindent
{\em Condition 5(a)}: It suffices to show that if $b\rightsquigarrow'_w\beta$ and $w\sim'_b v$, then $b\rightsquigarrow'_v\beta$. 
The assumption $b\rightsquigarrow'_w\beta$ implies 
$u\sim'_b w$ by the definition of the relation $\rightsquigarrow'_w$. Hence
$u\sim'_b v$ by the assumption $w\sim'_b v$.
Therefore, $b\rightsquigarrow'_v\beta$ again by the definition of the relation $\rightsquigarrow'_w$.

\noindent
{\em Condition 5(b)}: Since $X'_{wb} = X_{wb}$, it suffices to show that if $b\rightsquigarrow'_w \beta$ and $\R\psi\notin X_{wb}$, then $\psi\notin X'_{w\beta}$ for each $w\in P'_\beta$. The last statement follows from equations~\eqref{11-may-c} and~\eqref{11-may-b} and the consistency of set $X'_{u\beta}$.
\end{proof-of-claim}

To finish the proof of the lemma, note that the frame $(\alpha,\beta+1,\lambda',P',X',\sim',\rightsquigarrow')$ is an extension of the frame $(\alpha,\beta,\lambda,P,X,\sim,\rightsquigarrow)$ by Definition~\ref{extension definition}. Also, observe that
$b\rightsquigarrow'_u\beta$ by the definition of the relation $\rightsquigarrow'$. Finally, $\phi\in X'_{u\beta}$ by statement~\eqref{11-may-c}.
\end{proof}

\noindent{\bf Lemma~\ref{type 4 lemma}}
{\em 
For any finite frame $(\alpha,\beta,\lambda,P,X,\sim,\rightsquigarrow)$, any $(u,b)\in P$, and any formula $\D\phi\notin X_{ub}$,
there is an extension $(\alpha+1,\beta,\lambda',P',X',\sim',\rightsquigarrow')$ such that $u\sim'_b \alpha$ and $\lambda'_\alpha$ is equal to $\neg\phi$. 
}
\begin{proof}
Let, for each $w<\alpha+1$,
\begin{equation}\label{12-may-b}
\lambda'_{w}=
\begin{cases}
\neg\phi & \text{if $w=\alpha$},\\
\lambda_{w} & \text{otherwise},\\
\end{cases}
\end{equation}
and 
\begin{equation}\label{12-may-c}
P'=P\cup \{(\alpha,c)\mid b\rightsquigarrow_u c\}.    
\end{equation}

Consider the set of formulae
\begin{equation}\label{19-may-d}
Y^-=\{\neg\A^n\neg\lambda'_\alpha\mid n\ge 0\}\cup\{\psi\mid \K\psi\in X_{ub}\}.    
\end{equation}

\begin{claim}
Set $Y^-$ is consistent.     
\end{claim}
\begin{proof-of-claim}
Suppose the opposite. Then there are formulae
\begin{equation}\label{12-may-d}
    \K\psi_1,\ldots,\K\psi_m\in X_{ub}
\end{equation}
and numbers $n_1,\ldots, n_k$ such that
$$\psi_1, \ldots,\psi_m\vdash \A^{n_1}\neg\lambda'_\alpha\vee \A^{n_2}\neg\lambda'_\alpha\vee \dots \vee \A^{n_k}\neg\lambda'_\alpha.$$
Consider any $n\ge 0$ such that $n\ge n_i$ for each $i\le k$. Then, by Lemma~\ref{Monotonicity-of-A} and propositional reasoning,
$$\psi_1, \ldots,\psi_m\vdash \A^{n}\neg\lambda'_\alpha.$$
Thus, by equation~\eqref{12-may-b},
$$\psi_1,\dots,\psi_k\vdash \A^n\neg\neg\phi.$$
Then by Lemma~\ref{super distributivity}, 
$$\K\psi_1,\dots,\K\psi_k\vdash \K\A^n\neg\neg\phi.$$
By the Self-Awareness axiom and Modus Ponens rule, 
$$\K\psi_1,\dots,\K\psi_k\vdash \D\A^n\neg\neg\phi.$$
Hence, $X_{ub}\vdash  \D\A^{n}\neg\neg\phi$ by statement~\eqref{12-may-d}. Thus, 
\begin{equation}\label{18-may-a}
X_{ub}\vdash  \D\neg\neg\phi    
\end{equation}
by Lemma~\ref{A lemma} and the Modus Ponens inference rule. At the same time, by propositional tautology $\neg\neg\phi\to\phi$, by the Monotonicity inference rule, implies $\vdash\D\neg\neg\phi\to\D\phi$. Then, $X_{ub}\vdash  \D\phi$ by statement~\eqref{18-may-a} and the Modus Ponens inference rule. Therefore, because $X_{ub}$ is a maximal consistent set of formulae, $\D\phi\in X_{ub}$,  which contradicts the assumption $\D\phi\notin X_{ub}$ of the lemma.
\end{proof-of-claim}
By Lemma~\ref{Lindenbaum's lemma}, set $Y^-$ can be extended to a maximal consistent set $Y$. 
The proof of the next claim is the same as the proof of Claim~\ref{29-apr-claim}, but it uses equation~\eqref{19-may-d} instead of equation~\eqref{Y- def}.
\begin{claim}\label{19-may-cc}
$\K\psi\in X_{ub}$ iff $\K\psi\in Y$ for any formula $\psi\in\Phi$.   
\end{claim}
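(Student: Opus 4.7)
The plan is to follow the proof of Claim~\ref{29-apr-claim} verbatim, with the sole modification that I appeal to equation~\eqref{19-may-d} in place of equation~\eqref{Y- def}. The key observation is that in the definition of $Y^-$, the clause $\{\psi\mid \K\psi\in X_{ub}\}$ appears identically in both equation~\eqref{Y- def} and equation~\eqref{19-may-d}; only the other conjunct (``negated awareness'' formulas vs.\ ``$\neg\A^n\neg\lambda'_\alpha$'' formulas) differs, and that conjunct plays no role in the proof of the claim.

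For the forward direction, I would assume $\K\psi\in X_{ub}$. By Lemma~\ref{positive introspection lemma} (Positive Introspection) and Modus Ponens, $X_{ub}\vdash \K\K\psi$. Since $X_{ub}$ is a maximal consistent set, this gives $\K\K\psi\in X_{ub}$. Then by equation~\eqref{19-may-d} (applied to $\K\psi$ rather than $\psi$), we have $\K\psi\in Y^-\subseteq Y$.

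For the backward direction, I would argue by contrapositive. Suppose $\K\psi\notin X_{ub}$; by maximality $\neg\K\psi\in X_{ub}$. By the Negative Introspection axiom and Modus Ponens, $X_{ub}\vdash \K\neg\K\psi$, so $\K\neg\K\psi\in X_{ub}$ by maximality. Applying equation~\eqref{19-may-d} (now to $\neg\K\psi$), we obtain $\neg\K\psi\in Y^-\subseteq Y$. Since $Y$ is consistent, $\K\psi\notin Y$.

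There is no real obstacle: the only subtlety is to notice that replacing the first conjunct of $Y^-$ (the $\{\top\}$/negated-awareness clause in \eqref{Y- def}) with $\{\neg\A^n\neg\lambda'_\alpha\mid n\ge 0\}$ in \eqref{19-may-d} changes nothing in either direction, since every use of $Y^-$ in the argument passes through the $\K$-closure clause $\{\psi\mid \K\psi\in X_{ub}\}$. Thus the proof transfers without modification.
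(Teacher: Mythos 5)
Your proof is correct and is essentially identical to the paper's: the paper itself proves this claim by stating that the argument for Claim~\ref{29-apr-claim} carries over verbatim with equation~\eqref{19-may-d} in place of equation~\eqref{Y- def}, which is exactly what you carry out (Positive Introspection for the forward direction, Negative Introspection plus consistency of $Y$ for the backward direction). The only nitpick is that the first conjunct of \eqref{Y- def} is $\{\neg\phi\}$ rather than a ``$\{\top\}$/negated-awareness'' clause, but as you correctly observe, only the shared clause $\{\psi\mid\K\psi\in X_{ub}\}$ is used, so this does not affect the argument.
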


\begin{claim}\label{18-may-b}
Set $Y$ is $\neg\phi$-assured.    
\end{claim}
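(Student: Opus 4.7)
The plan is to unpack Definition~\ref{assured} in this specific setting: since $\lambda'_\alpha = \neg\phi$ by equation~\eqref{12-may-b}, being $\neg\phi$-assured amounts to showing $Y \nvdash \A^n \neg\neg\phi$ for every $n \ge 0$, i.e.\ $Y \nvdash \A^n \neg\lambda'_\alpha$ for every $n \ge 0$. The set $Y^-$ was explicitly designed with this in mind: by equation~\eqref{19-may-d}, $\neg\A^n\neg\lambda'_\alpha \in Y^-$ for each $n\ge 0$, and because $Y \supseteq Y^-$ we immediately obtain $\neg\A^n\neg\lambda'_\alpha \in Y$.

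From here the argument is routine. Since $Y$ is a maximal consistent set, the presence of $\neg\A^n\neg\lambda'_\alpha$ in $Y$ forbids $\A^n\neg\lambda'_\alpha$ from also being in $Y$ (by consistency), and maximal consistency gives $Y \vdash \chi$ iff $\chi \in Y$ for any formula $\chi$. So $Y \nvdash \A^n\neg\lambda'_\alpha$ for every $n\ge 0$, which is exactly the condition of Definition~\ref{assured} for $\lambda = \lambda'_\alpha = \neg\phi$.

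There is no substantive obstacle at this point — the genuine work was already performed in the earlier sub-claim establishing the consistency of $Y^-$, which is precisely what allowed Lemma~\ref{Lindenbaum's lemma} to produce a maximal consistent $Y$ containing all of the formulae $\neg\A^n\neg\lambda'_\alpha$. The only bookkeeping care needed is to remember the double negation arising from $\neg\lambda'_\alpha = \neg\neg\phi$ and not accidentally ``simplify'' it, since Definition~\ref{assured} is stated in terms of $\A^n\neg\lambda$ and not in terms of $\phi$ directly.
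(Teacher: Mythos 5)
Your proof is correct and follows essentially the same route as the paper's: both observe that $\neg\A^n\neg\lambda'_\alpha\in Y^-\subseteq Y$ by the construction of $Y^-$, identify $\lambda'_\alpha$ with $\neg\phi$ via the defining equation, and conclude $Y\nvdash\A^n\neg\neg\phi$ from the consistency (and maximality) of $Y$. No differences worth noting.
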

\begin{proof-of-claim}
Consider any $n\ge 0$. By Definition~\ref{assured}, it suffices to show that $Y\nvdash \A^n\neg\neg\phi$.

Note that $\neg\A^n\neg\lambda'_\alpha\in Y^-\subseteq Y$ by equation~\ref{Y- def}. Thus, $\neg\A^n\neg\neg\phi\in Y$ by equation~\eqref{12-may-b}. Therefore, $Y\nvdash \A^n\neg\neg\phi$ because set $Y$ is consistent.
\end{proof-of-claim}
Observe that $\top$ is a tautology. Then, $\vdash \R\top$ by the Self-Awareness axiom and the Modus Ponens inference rule. Hence, $\R\top\in Y$ because $Y$ is a maximal consistent set of formulae. Thus, by Lemma~\ref{bridge builder} and Claim~\ref{18-may-b} there is a
\begin{equation}\label{19-may-b}
\text{$\neg\phi$-assured maximal consistent set $Z$}   
\end{equation}
 such that
\begin{equation}\label{19-may-c}
\{\psi\mid \R\psi\notin Y\}\subseteq Z.    
\end{equation}
Consider partial function $X'$ defined by the following matrix:
$$
X'=
\begin{bmatrix}
X_{00} & \dots & X_{0,b-1} & X_{0b} & X_{0,b+1} &\dots &X_{0,\beta-1}\\
\vdots &&\vdots&\vdots&\vdots&&\vdots\\
X_{u0} & \dots & X_{u,b-1} & X_{ub} & X_{u,b+1} &\dots &X_{u,\beta-1}\\
\vdots &&\vdots&\vdots&\vdots&&\vdots\\
X_{\alpha-1,0} & \dots & X_{\alpha-1,b-1} & X_{\alpha-1,b} & X_{\alpha-1,b+1} &\dots &X_{\alpha-1,\beta-1}\\
X'_{\alpha 0} &\dots &X'_{\alpha,b-1}&X'_{\alpha b}&X'_{\alpha,b+1} &\dots &X'_{\alpha,\beta-1}
\end{bmatrix}
$$
where
\begin{equation}\label{19-may-a}
X'_{\alpha,a}=
\begin{cases}
Y & \text{if $a=b$},\\
Z & \text{if $a\neq b$ and $b\rightsquigarrow_u a$},\\
\text{undefined} &\text{otherwise}.
\end{cases}
\end{equation}

For $a\neq b$, let relation $\sim'_a$ on the set $P'_a$ be the reflexive closure of the relation $\sim_a$. Additionally, let relation $\sim'_b$ on the set $P'_b$ be the reflexive, symmetric, and transitive closure of the relation $\sim_b\cup\{(u,\alpha)\}$. 

For $w\neq \alpha$, let relation $\rightsquigarrow'_w$ on set $P'_w$ be the relation $\rightsquigarrow_w$. Furthermore, let relation $\rightsquigarrow'_\alpha$ on the set $\beta$ be the reflexive closure of the relation $\{(b,c)\mid\;  b\rightsquigarrow_u  c,\, c<\beta\}$.

\begin{claim}
Tuple $(\alpha+1,\beta,\lambda',P',X',\sim',\rightsquigarrow')$ is a frame.    
\end{claim}
\begin{proof-of-claim}
We prove conditions 3, 4(a), 5(a), and 5(b) of Definition~\ref{frame} separetely. 

\noindent
{\em Condition 3}: Recall that $\lambda'_w=\lambda_w$ for each $w<\alpha$. Thus, it suffices to show that set $X'_{\alpha a}$ is $\lambda'_\alpha$-assured for each $a\in P'_{\alpha}$. Note that sets $Y$ and $Z$ are $\neg\phi$-assured by Claim~\ref{18-may-b} and statement~\eqref{19-may-b} respectively. Then, by equation~\eqref{12-may-b}, sets $Y$ and $Z$ are $\lambda'_{\alpha}$-assured. Hence, by equation~\eqref{19-may-a}, set $X'_{\alpha a}$ is $\lambda'_\alpha$-assured.

\noindent
{\em Condition 4(a)}:
It suffices to show that $\K\psi\in  X'_{u b}$ iff $\K\psi\in  X'_{\alpha b}$ for each formula $\psi\in\Phi$. The last statement follows from Claim~\ref{19-may-cc} and equation~\eqref{19-may-a}.

\noindent\vspace{1mm}
{\em Condition 5(a)}: It suffices to show that if $b\rightsquigarrow'_w c$, then $b\rightsquigarrow'_\alpha c$ for each $c<\beta$. The last statement follows from the definition of the relation $\rightsquigarrow'_\alpha$.

\noindent\vspace{1mm}
{\em Condition 5(b)}: It suffices to show that if $b\rightsquigarrow'_\alpha c$ and $\R\psi\notin X'_{\alpha b}$, then $\psi\notin X'_{\alpha c}$ for each $c<\beta$. If $b\neq c$, then, by equation~\eqref{19-may-a}, it suffices to show that if $c\in P_\alpha$ and $\R\psi\notin Y$, then $\psi\notin Z$. The last statement follows from equation~\eqref{19-may-c} and the consistency of set $Z$. If $b=c$, then by equation~\eqref{19-may-a}, it suffices to show that if $\R\psi\notin Y$, then $\psi\notin Y$. The last statement follows from the Self-Awareness axiom, applied contrapositively, because $Y$ is a maximal consistent set.  
\end{proof-of-claim}

By Definition~\ref{extension definition}, the frame $(\alpha+1,\beta,\lambda',P',X',\sim',\rightsquigarrow')$ is an extension of the frame $(\alpha,\beta,\lambda,P,X,\sim,\rightsquigarrow)$. Note that $u\sim'_b \alpha$ by definition of the relation $\sim'$ and $\lambda'_\alpha$ is equal to $\neg\phi$ by equation~\eqref{12-may-b}.
\end{proof}

\end{document}